\documentclass[runningheads]{llncs}

\usepackage{eccv}

\usepackage{eccvabbrv}

\usepackage{graphicx}
\usepackage{booktabs}

\usepackage[accsupp]{axessibility}  

\usepackage{hyperref}

\usepackage{orcidlink}
\usepackage{amsmath}
\usepackage{amssymb}
\usepackage{mathtools}
\usepackage{tabularx}
\usepackage{booktabs}
\usepackage{algorithm}
\usepackage{multirow}
\usepackage{makecell}
\usepackage{xcolor,colortbl}
\usepackage{microtype}
\usepackage{algpseudocode}

\definecolor{head_purple}{rgb}{0.580, 0.404, 0.741}
\definecolor{head_red}{rgb}{0.882, 0.341, 0.349}
\definecolor{head_blue}{rgb}{0.30588, 0.4745, 0.6549}
\definecolor{Periwinkle}{rgb}{0.56,0.56,0.86}

\begin{document}

\title{Geometric Gradient Rectification for Safe Open-Set Semi-Supervised Learning} 

\titlerunning{Geometric Gradient Rectification for OSSL}

\author{Jiahe~Chen\inst{1}\textsuperscript{*} \and
Qian~Shao\inst{1}\textsuperscript{*} \and
Qiyuan~Chen\inst{1} \and
Jiaying~He\inst{1,3} \and
Jintai~Chen\inst{4} \and
Jian~Wu\inst{1,5,6}\textsuperscript{$\dagger$} \and
Hongxia~Xu\inst{1,2}\textsuperscript{$\dagger$}
}

\authorrunning{J.~Chen et al.}

\makeatletter
\newcommand{\inlineinst}{\stepcounter{@inst}\enspace$^{\the@inst}$\enspace}
\makeatother

\institute{College of Computer Science and Technology, Zhejiang University
\and Liangzhu Laboratory and WeDoctor Cloud
\and Shanghai Innovation Institute
\and The Hong Kong University of Science and Technology (Guangzhou)
\and State Key Laboratory of Transvascular Implantation Devices and TIDRI
\and Zhejiang Key Laboratory of Medical Imaging Artificial Intelligence
}

\makeatletter
\newcommand{\blfootnote}[1]{%
  \begingroup
  \renewcommand{\thefootnote}{}%
  \footnotetext{#1}%
  \addtocounter{footnote}{-1}%
  \endgroup
}
\makeatother

\maketitle

\blfootnote{\textsuperscript{*}Equal contribution.}
\blfootnote{\textsuperscript{$\dagger$}Corresponding authors.}

\begin{abstract}
Open-set semi-supervised learning aims to leverage unlabeled data that may contain out-of-distribution outliers while maintaining performance on in-distribution classes. Existing methods mainly follow two paradigms: filtering suspicious samples or incorporating unlabeled objectives with soft weighting. We argue that both face a common trade-off: aggressive filtering can discard informative but hard ID samples, whereas utilization can introduce auxiliary gradients that conflict with supervised learning when pseudo labels are wrong. We therefore shift the focus from sample selection to gradient-level control. We propose \textit{Geometric Gradient Rectification} (GGR), a plug-in framework that uses the supervised gradient as an anchor and projects conflicting auxiliary gradients onto an admissible region in gradient space. This makes the applied auxiliary update first-order non-opposing within the rectified coordinate block while preserving orthogonal components that may still carry useful representation signals. We further extend GGR with subspace-aware rectification to stabilize the anchor under noisy mini-batch gradients. Experiments on CIFAR and ImageNet benchmarks show that GGR improves representative OSSL baselines in most settings and yields gains in both closed-set generalization and open-set robustness. Code will be available at \url{https://github.com/JiaheChen2002/GGR}.
\keywords{Open-Set Semi-Supervised Learning \and Gradient Rectification \and Optimization Control}
\end{abstract}

\section{Introduction}

\begin{figure}[t] 
\centering \includegraphics[width=0.49\linewidth]{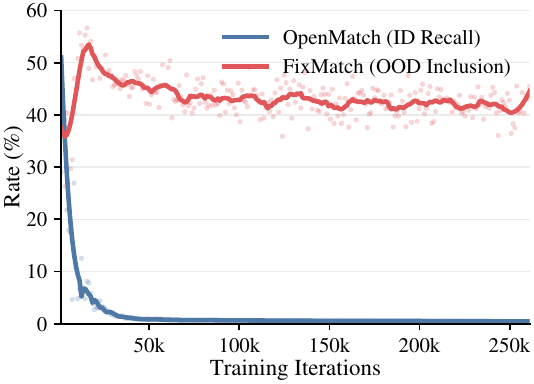} \hfill \includegraphics[width=0.49\linewidth]{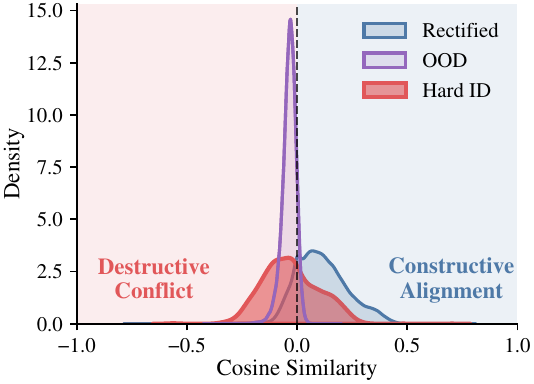} 
\caption{\textbf{Left: OSSL dilemma.} A trade-off between robustness and coverage that can lead to either feature starvation or noise domination.
\textbf{Right: Gradient geometry.} Density of cosine similarity between unlabeled gradients and the oracle supervised direction.
\textcolor{head_purple}{\textbf{OOD:}} mostly orthogonal noise;
\textcolor{head_red}{\textbf{Hard ID:}} misclassified samples induce adversarial gradients;
\textcolor{head_blue}{\textbf{Rectified:}} GGR projects conflicting components into an anchor-aligned subspace, shifting updates toward the constructive space.}
\label{fig:head} 
\vspace{-0.5cm}
\end{figure}

Semi-supervised learning (SSL) \cite{lee2013pseudo,meanteacher,oliver2018realistic,ssl_survey} effectively scales training by leveraging unlabeled data. However, standard SSL assumes a closed-set scenario, which rarely holds in real-world deployments where unlabeled streams contain Out-of-Distribution (OOD) outliers. In this open-set semi-supervised learning (OSSL) setting, naively enforcing consistency on OOD data can introduce substantial noise and degrade performance. Existing research mainly falls into two paradigms: \textit{filtering-based} and \textit{utilization-based} methods \cite{uasd,mtcf,openmatch,t2t,safestudent,prosub}. We argue that both paradigms face a recurring trade-off between rejecting harmful samples and preserving informative but ambiguous ones.

Filtering-based methods, such as OpenMatch \cite{openmatch}, prioritize robustness by rejecting suspicious samples. However, distinguishing between OOD noise and hard in-distribution (ID) samples is notoriously difficult in low-label regimes. To minimize false positives, these methods can become overly strict and discard valid hard ID samples, leading to feature starvation (Figure \ref{fig:head}, left). Conversely, utilization-based approaches like FixMatch \cite{fixmatch} and IOMatch \cite{iomatch} attempt to maximize data usage, often assigning outliers to a unified unknown class. This can misguide optimization: misclassifying a hard ID sample as unknown implicitly penalizes its true class probability and may weaken the learned ID manifold.

We argue that this dilemma arises because sample-level separation of OOD and hard ID data is intrinsically unreliable. Instead of hinging the pipeline on brittle categorical decisions, it is useful to examine the resulting optimization dynamics. Under softmax cross-entropy, assigning an unlabeled sample to an incorrect target can generate a gradient that opposes the oracle supervised update.

To make this concrete, we visualize the cosine similarity between unlabeled gradients and the oracle supervised direction in the right panel of Figure \ref{fig:head}. The distributions reveal two distinct behaviors:
\textbf{(i)} OOD outliers can be harmful when included in large quantities, as seen in FixMatch, by injecting variance that weakens the learning signal. Nevertheless, their gradients are often approximately orthogonal to the supervised direction, acting more like stochastic noise than systematic opposition.
\textbf{(ii)} Misclassified difficult IDs are often more destructive directionally. Because they share semantics with ID classes yet are pushed toward wrong targets, they can generate strong adversarial gradients that directly oppose the supervised objective.

These findings suggest a shift in perspective. Robustness in OSSL cannot rely solely on sample selection or category assignment. Instead, it can benefit from geometric control of optimization that suppresses harmful updates even when OOD/ID decisions are imperfect.

Motivated by this observation, we propose \textbf{G}eometric \textbf{G}radient \textbf{R}ectification (GGR), a framework that rectifies updates rather than discarding ambiguous samples or committing to potentially wrong targets. At its core, GGR performs post-hoc rectification in gradient space with respect to a supervised anchor. We further study two subspace-aware variants that summarize recent supervised descent directions in a low-dimensional basis to improve stability under noisy mini-batch gradients. In practice, this can reduce destructive interference from misguided difficult IDs, damp variance from bystander OODs, and preserve useful feature signals from ambiguous data.

Our main contributions are threefold:
\begin{itemize}
\item We identify the identification bottleneck in OSSL and reframe the dilemma of feature starvation versus active unlearning as a fundamental problem of gradient conflict.
\item We propose GGR, a plug-in optimization framework that enforces first-order non-opposition of the applied auxiliary update within the rectified coordinate block via gradient projection, without requiring perfect OOD detection.
\item Extensive experiments show that GGR improves strong OSSL baselines in most settings, particularly in low-label regimes.
\end{itemize}

\section{Related Work}

\subsection{Standard Semi-Supervised Learning} 
Mainstream semi-supervised learning (SSL) is largely built upon two core mechanisms: consistency regularization and pseudo-labeling. Early works \cite{pseudo-ensembles, temporal-ensembling, meanteacher, vat} improved generalization by enforcing prediction invariance under stochastic or adversarial perturbations, laying the foundation for modern SSL. Holistic methods like MixMatch \cite{mixmatch} and ReMixMatch \cite{remixmatch} synthesized these principles with distribution alignment and augmentation anchoring to stabilize training. Building on these advancements, FixMatch \cite{fixmatch} simplified the pipeline by unifying weak-to-strong consistency with confidence-based pseudo-labeling, establishing a robust baseline. Recent works further refine this framework from two perspectives: (1) Dynamic Thresholding: strategies like FlexMatch \cite{flexmatch} and FreeMatch \cite{freematch} introduce adaptive thresholds to mitigate the learning difficulty variance across classes; (2) Structure Awareness: methods such as CoMatch \cite{comatch} and SimMatch \cite{simmatch} incorporate graph-based or contrastive objectives to enhance feature discriminability.


\subsection{Open-Set Semi-Supervised Learning}
OSSL addresses the challenge where the unlabeled pool contains unknown classes \cite{ssb, ren2024partial, yang2024robust,yang2021s2osc}. Existing strategies primarily fall into two paradigms based on how they handle potential outliers: filtering and utilization.

\textbf{Filtering-based.} The dominant strategy prioritizes robustness by explicitly detecting and rejecting outliers. Early works like OpenMatch~\cite{openmatch} and UASD~\cite{uasd} rely on confidence thresholds or entropy to filter OOD samples.
To improve detection robustness in low-label regimes, recent state-of-the-art methods introduce more sophisticated selection mechanisms. ProSub \cite{prosub} advances this paradigm by modeling the feature space geometry, using angles to an ID subspace to derive probabilistic outlier scores rather than relying on brittle hard thresholds; SCOMatch~\cite{scomatch} maintains an OOD memory queue to calibrate confidence for better selection;
and UAGreg~\cite{uagreg} introduces graph regularization to smooth uncertainty estimates for soft weighting; DAC~\cite{dac} leverages prediction disagreement among diverse heads to identify and filter unstable samples. Despite these advances, the paradigm still reduces the problem largely to sample selection. This creates a recurring trade-off: stricter filtering can reduce OOD noise but also suppress informative difficult ID samples, ultimately causing feature starvation.

\textbf{Utilization-based.} To circumvent data waste, another line of research attempts to utilize all unlabeled data by assigning outliers to a unified target. IOMatch~\cite{iomatch} treats outliers as an additional class via a unified open-set target. Similarly, MTCF~\cite{mtcf} and T2T~\cite{t2t} leverage auxiliary detection heads or self-supervised signals to harvest representation benefits from OOD samples. Although retaining difficult samples alleviates data scarcity, it also introduces a risk of misguidance. Forcing diverse difficult ID samples into a single unknown category or auxiliary targets can generate gradients that conflict with the primary supervised objective and may weaken learned ID features.

\subsection{Optimization Control and Gradient Surgery}
Different from sample-level selection, optimization-based methods focus on resolving task interference directly in the gradient space. Techniques like PCGrad \cite{pcgrad} project conflicting gradients in multi-task learning to prevent negative transfer. However, standard gradient surgery assumes symmetric task importance. In OSSL, the requirement is asymmetric: the unsupervised objective should not oppose the supervised learning signal. Our GGR framework bridges this gap by keeping the supervised gradient fixed and rectifying only the auxiliary one; Appendix~\ref{app:pcgrad} provides a direct comparison to symmetric PCGrad.
Instead of relying on brittle sample selection or potentially destructive joint training, we use geometric gradient rectification to control auxiliary updates directly in gradient space while preserving the supervised anchor.

\section{Methodology}
\label{sec:method}

\subsection{Problem Formulation and Gradient Conflict}
\label{sec:formulation_conflict}

We consider OSSL with a labeled set $\mathcal{D}_l=\{(x_i^l,y_i^l)\}_{i=1}^{N_l}$ over $K$ ID classes and an unlabeled set $\mathcal{D}_u=\{x_i^u\}_{i=1}^{N_u}$ contaminated by OOD samples. A base algorithm optimizes a composite objective:
\begin{equation}
\min_{\theta}\; \mathcal{J}(\theta) := \mathcal{L}_s(\theta;\mathcal{D}_l) + \lambda_u\,\mathcal{L}_u(\theta;\mathcal{D}_u),
\label{eq:base_objective}
\end{equation}
where $\mathcal{L}_s$ represents the supervised loss on labeled data and $\mathcal{L}_u$ denotes an auxiliary objective. At iteration $t$, let $g_s := \nabla_\theta \mathcal{L}_s(\theta_t)$ and $g_u := \nabla_\theta \mathcal{L}_u(\theta_t)$ be the stochastic gradients computed on the current labeled and unlabeled batches, respectively. We denote the Euclidean inner product as $\langle \cdot,\cdot\rangle$.

A primary failure in this formulation stems not merely from the presence of OOD samples, but from how $\mathcal{L}_u$ processes ambiguous data. Assigning a difficult ID sample to an incorrect pseudo label or an unknown category can produce an unlabeled gradient $g_u$ that opposes the supervised descent direction $g_s$, potentially yielding $\langle g_s, g_u\rangle < 0$. Such destructive interference can weaken discriminative ID features by enforcing erroneous constraints.

This observation establishes a fundamental first-order non-interference requirement: the auxiliary update applied within the rectified scope should not oppose supervised progress at the current step. Formally, we seek a rectified auxiliary direction $\tilde{g}_u$ such that $\langle g_s, \tilde{g}_u\rangle \ge 0$.

\subsection{Geometric Gradient Rectification (GGR)}
\label{sec:ggr}

GGR is a framework comprising three rectifiers that share the same first-order non-opposition principle: a default \emph{vector-level rectifier} (VLR) derived in this subsection, and two subspace-aware variants---the \emph{orthogonal subspace rectifier} (OSR) and the \emph{conic subspace rectifier} (CSR)---presented in Section~\ref{sec:subspace}. We begin with VLR, which uses the instantaneous supervised gradient as the anchor.

Our goal is to ensure that the auxiliary update does not oppose the supervised descent direction at the current iteration. 
Let $g_s := \nabla_\theta \mathcal{L}_{\mathrm{s}}$ denote the supervised gradient and let $g_u := \nabla_\theta \mathcal{L}_{\mathrm{u}}$ denote the raw auxiliary gradient inside the selected surgery scope. To enforce the first-order non-interference constraint, we require the corrected auxiliary direction to satisfy
\begin{equation}
\langle \tilde g_u, g_s\rangle \ge 0.
\label{eq:safety_constraint_repeat}
\end{equation}
Geometrically, this constraint defines a closed half-space induced by $g_s$:
\begin{equation}
\mathcal{H}_{\mathrm{safe}}(g_s)
:= \left\{ d\in\mathbb{R}^D ~\middle|~ \langle d, g_s\rangle \ge 0 \right\}.
\end{equation}
We then obtain $\tilde g_u$ by the Euclidean projection of $g_u$ onto $\mathcal{H}_{\mathrm{safe}}(g_s)$:
\begin{equation}
\tilde{g}_u
~=~\arg\min_{d\in\mathbb{R}^D}\;\frac{1}{2}\|d-g_u\|_2^2 
\quad\text{s.t.}\quad \langle d, g_s\rangle \ge 0.
\label{eq:qp_halfspace}
\end{equation}
This formulation is intentionally asymmetric: the supervised gradient $g_s$ acts as an anchor that specifies the admissible region, while only the auxiliary gradient is modified. The same asymmetric anchoring principle carries over to the subspace variants in Section~\ref{sec:subspace}, where the anchor is replaced by a supervised subspace $U$ rather than the instantaneous $g_s$; in all three GGR rectifiers the supervised direction itself is never altered. Unlike symmetric gradient surgery methods such as PCGrad \cite{pcgrad}, GGR never alters the supervised anchor; a direct comparison is provided in Appendix~\ref{app:pcgrad}.

The projection in \eqref{eq:qp_halfspace} admits a closed-form solution, which we refer to as the \emph{vector-level rectifier} (VLR):
\begin{equation}
\tilde{g}_u =
\begin{cases}
g_u - \dfrac{\langle g_u, g_s\rangle}{\|g_s\|_2^2}\,g_s, 
& \text{if }\langle g_u,g_s\rangle<0\text{ and }\|g_s\|_2>0,\\[6pt]
g_u, & \text{otherwise.}
\end{cases}
\label{eq:ggr_vector}
\end{equation}
When $\langle g_u,g_s\rangle<0$, \eqref{eq:ggr_vector} removes exactly the component of $g_u$ that points toward $-g_s$, while preserving the orthogonal component $g_u - \frac{\langle g_u,g_s\rangle}{\|g_s\|_2^2}g_s$ that may still carry useful representation-learning signals. 
Algorithm~\ref{alg:ggr} implements VLR together with the two subspace variants of Section~\ref{sec:subspace}, applying full rectification whenever a conflict is detected; we do not introduce any additional soft gating coefficient.

\paragraph{Plug-and-play update.}
Inside the selected surgery scope, GGR modifies a baseline OSSL method by substituting the raw auxiliary gradient $g_u$ with its rectified counterpart $\tilde g_u$ before weighting and aggregation:
\begin{equation}
\theta_{t+1} \leftarrow \theta_t - \eta\left(g_s + \lambda_u \tilde g_u\right),
\label{eq:update}
\end{equation}
When rectification is applied only to a subset $\mathcal{P}\subseteq\theta$, Eq.~\eqref{eq:update} should be interpreted on the flattened coordinates of $\mathcal{P}$; parameters outside $\mathcal{P}$ follow the base update unchanged, as in Algorithm~\ref{alg:ggr}. Equivalently, because VLR is positively homogeneous, one may view GGR as rectifying the weighted auxiliary gradient $\lambda_u g_u$ instead; applying $\lambda_u$ before or after rectification is the same.
Importantly, GGR is purely a post-hoc operation in gradient space: it does not require explicit OOD detection, sample rejection, or any modification to the forward pass or the loss definitions of the underlying OSSL framework.

\subsection{Rectification via Subspaces for Robust Anchoring}
\label{sec:subspace}

In practice, the minibatch supervised gradient $g_s$ can be noisy, which may lead to unstable rectification decisions.
To obtain a more robust anchor, we maintain a compact orthonormal basis $U=[u_1,\dots,u_k]\in\mathbb{R}^{D\times k}$ (with $k\le d$) that summarizes recent supervised directions via periodic orthonormalization.
This defines a supervised subspace $\mathcal{S}=\mathrm{span}(U)$, which we use to rectify $g_u$ at the subspace level.
We consider two efficient variants:

\textbf{Orthogonal subspace rectification. (OSR)}
This conservative option removes the entire component of $g_u$ that lies in $\mathcal{S}$:
\begin{equation}
\tilde{g}_u^{\mathrm{orth}} = (I-UU^\top)\,g_u.
\label{eq:ggr_orth}
\end{equation}
Since $(I-UU^\top)$ projects onto the orthogonal complement of $\mathcal{S}$, we have $\langle d,\tilde g_u^{\mathrm{orth}}\rangle=0$ for all $d\in\mathcal{S}$. 
In particular, if the current supervised gradient lies in $\mathcal{S}$, then $\langle g_s,\tilde g_u^{\mathrm{orth}}\rangle=0$, so the auxiliary update is exactly orthogonal to the supervised anchor and therefore non-opposing.

\textbf{Conic Subspace Rectification (CSR).}
This less restrictive variant retains orthogonal signals while removing negative coordinates inside $\mathcal{S}$:
\begin{equation}
\tilde{g}_u^{\mathrm{signed}} 
= g_u - U\,\min(U^\top g_u, 0),
\label{eq:ggr_signed}
\end{equation}
where $\min(\cdot,0)$ is applied element-wise.
When $U$ is orthonormal, \eqref{eq:ggr_signed} is exactly the Euclidean projection of $g_u$ onto the convex cone
$\mathcal{C}(U)=\{d~|~\langle d,u_i\rangle\ge 0,\ \forall i\}$.
This operation removes only the directions that conflict with the stored supervised anchors, while preserving components that remain compatible.
Furthermore, whenever $g_s$ lies exactly in the conic hull of $\{u_i\}$, the rectified gradient satisfies $\langle g_s,\tilde g_u^{\mathrm{signed}}\rangle \ge 0$. When $g_s$ is only approximately represented by this cone, CSR should be interpreted as an approximate anchoring heuristic rather than a strict guarantee.

\paragraph{Summary.}
Algorithm~\ref{alg:ggr} presents the complete training procedure.
At each iteration, we compute $g_s$ from labeled data and $g_u$ from unlabeled objectives, optionally update the anchor subspace $U$ from recent supervised gradients, and rectify $g_u$ using either the basic VLR \eqref{eq:ggr_vector} or a subspace-level variant OSR \eqref{eq:ggr_orth} or CSR \eqref{eq:ggr_signed}.
The rectified auxiliary gradient is then combined with $g_s$ for the optimizer step, while the pseudo-labeling mechanism and the objective design of the base OSSL method remain unchanged.

\begin{algorithm}[!t]
\caption{Geometric Gradient Rectification as an Optimization Plug-in}
\label{alg:ggr}
\begin{algorithmic}[1]
\Require Labeled minibatch $\mathcal{B}_{\ell}$, unlabeled minibatch $\mathcal{B}_{u}$, model parameters $\theta$, \Require Supervised loss $\mathcal{L}_{\mathrm{s}}$, auxiliary loss $\mathcal{L}_{\mathrm{u}}$, unsupervised weight $\lambda_u$
\Require Surgery scope $\mathcal{P}\subseteq\theta$ (both / backbone / head), Subspace dimension $d \ge 0$
\Require Rectification mode $m\in\{\texttt{VLR}, \texttt{CSR}, \texttt{OSR}\}$
\State Initialize orthonormal basis $U \leftarrow \emptyset$ \Comment{$U\in\mathbb{R}^{D\times k},\,k\le d$}

\For{$t=1,2,\dots,\mathrm{MaxIter}$}
    \State Compute $\mathcal{L}_{\mathrm{s}}(\mathcal{B}_{\ell};\theta)$ and $\mathcal{L}_{\mathrm{u}}(\mathcal{B}_{u};\theta)$ \Comment{\textit{Forward Pass}}
    \State $g_s \leftarrow \nabla_{\mathcal{P}} \mathcal{L}_{\mathrm{s}}$,\hspace{6pt} $g_u \leftarrow \nabla_{\mathcal{P}} \mathcal{L}_{\mathrm{u}}$ \Comment{\textit{Backward Pass}}
    \State $\mathbf{g}_s \leftarrow \mathrm{Flatten}(g_s)$,\hspace{6pt} $\mathbf{g}_u \leftarrow \mathrm{Flatten}(g_u)$
    
    \If{$d > 0$}
        \State $U \leftarrow \textsc{UpdateSubspace}(U, \mathbf{g}_s, d)$ \Comment{\textit{Subspace Maintenance}}
    \EndIf
    
    \State $\mathbf{g}_u \leftarrow \textsc{Rectify}(\mathbf{g}_u, \mathbf{g}_s, U, m)$ \Comment{\textit{Geometric Rectification}}
    \State $g_u \leftarrow \mathrm{Unflatten}(\mathbf{g}_u)$
    \State $\nabla_{\mathcal{P}} \leftarrow g_s + \lambda_u g_u$,\hspace{6pt} $\nabla_{\theta \setminus \mathcal{P}} \leftarrow \nabla_{\theta \setminus \mathcal{P}} \bigl(\mathcal{L}_{\mathrm{s}} + \lambda_u \mathcal{L}_{\mathrm{u}}\bigr)$ \Comment{\textit{Apply Gradients}}
    \State Update $\theta$ with the optimizer step (e.g., SGD/Adam)
\EndFor

\vspace{1mm}
\Function{\textsc{UpdateSubspace}}{$U, \mathbf{g}_s, d$}
    \State $\mathbf{v} \leftarrow \mathbf{g}_s - U(U^\top \mathbf{g}_s)$
    \If{$\|\mathbf{v}\|_2 > 0$}
        \State $U \leftarrow [U,\, \mathbf{v} / \|\mathbf{v}\|_2]$ \Comment{\textit{Gram-Schmidt append}}
    \EndIf
    \State \Return $\mathrm{Orthonormalize}(\,U_{[:,\, -d:]}\,)$ \Comment{\textit{Keep last $d$ columns \& apply QR}}
\EndFunction

\vspace{1mm}
\Function{\textsc{Rectify}}{$\mathbf{g}_u, \mathbf{g}_s, U, m$}
    \If{$m = \texttt{VLR}$ \textbf{and} $\langle \mathbf{g}_u, \mathbf{g}_s \rangle < 0$ \textbf{and} $\|\mathbf{g}_s\|_2 > 0$}
        \State \Return $\mathbf{g}_u - \frac{\langle \mathbf{g}_u, \mathbf{g}_s \rangle}{\|\mathbf{g}_s\|_2^2}\mathbf{g}_s$ \Comment{\textit{Ref: Eq. \eqref{eq:ggr_vector}}}
    \ElsIf{$m = \texttt{OSR}$ \textbf{and} $U \neq \emptyset$}
        \State \Return $\mathbf{g}_u - U(U^\top \mathbf{g}_u)$ \Comment{\textit{Ref: Eq. \eqref{eq:ggr_orth}}}
    \ElsIf{$m = \texttt{CSR}$ \textbf{and} $U \neq \emptyset$}
        \State \Return $\mathbf{g}_u - U\min(U^\top \mathbf{g}_u, \mathbf{0})$ \Comment{\textit{Ref: Eq. \eqref{eq:ggr_signed}}}
    \EndIf
    \State \Return $\mathbf{g}_u$ \Comment{\textit{Fallback (no conflict or empty $U$)}}
\EndFunction
\end{algorithmic}
\end{algorithm}

\section{Theoretical Analysis}
\label{sec:theory}

We begin by establishing the geometric optimality of GGR. Given a supervised gradient $g_s$ and a raw auxiliary gradient $g_u$, the rectified update $\tilde{g}_u$ represents the unique minimizer for our objective. Among all possible directions that satisfy the constraint $\langle d, g_s\rangle \ge 0$, it applies the minimum possible modification to $g_u$ measured in the Euclidean norm. This leads directly to a crucial identity regarding asymmetric alignment. The inner product $\langle g_s, \tilde{g}_u\rangle$ exactly equals $\max(0, \langle g_s, g_u\rangle)$, which is inherently greater than or equal to zero.

This alignment yields a scoped first-order non-interference statement on the selected surgery coordinates. Let $z\in\mathbb{R}^D$ denote the flattened parameter vector inside the chosen surgery scope $\mathcal{P}$, while the coordinates outside $\mathcal{P}$ are held fixed for this local analysis. We assume that the resulting supervised-loss slice is smooth with a Lipschitz constant $L$. Under this condition, defining the block update direction at iteration $t$ as $v_t := g_s + \lambda_u \tilde{g}_u$, where $g_s=\nabla_z \mathcal{L}_s(z_t;\theta_{t,\setminus\mathcal{P}})$ and $\tilde g_u$ denotes the rectified raw auxiliary gradient on the same block, yields $\langle g_s, v_t\rangle \ge \|g_s\|_2^2$. Consequently, the partial update $z_{t+1}=z_t-\eta v_t$ yields one-step descent of the supervised objective with respect to this coordinate block under a suitable step size. If the learning rate $\eta$ is bounded by $\dfrac{\langle g_s,v_t\rangle}{L\|v_t\|_2^2}$, the loss slice decreases according to the following relation:
\begin{equation}
\mathcal{L}_s(z_{t+1};\theta_{t,\setminus\mathcal{P}}) \le \mathcal{L}_s(z_t;\theta_{t,\setminus\mathcal{P}}) - \frac{\eta}{2}\langle g_s,v_t\rangle.
\label{eq:monotone}
\end{equation}
Specifically, for any non-trivial scope gradient $g_s$, this descent is strict for that block step. We further formalize the optimization degradation through the cumulative \emph{applied} conflict regret over $T$ iterations, defined on the weighted auxiliary update actually used by the optimizer, namely $g_{\mathrm{app}}^{(t)}:=\lambda_u\tilde g_u^{(t)}$. For VLR, this applied regret is exactly zero, whereas raw pre-rectification conflicts may still occur before surgery.

To formalize the dilemma between safety and starvation, we introduce a stylized mixture model characterizing the unlabeled gradient as a combination of aligned signals from difficult IDs and conflicting components from outliers. Let $g_s^\star$ denote the non-zero population supervised gradient with a normalized direction $\hat{g}_s^\star := g_s^\star/\|g_s^\star\|_2$. We formulate the unlabeled gradient as $g_u = \beta g_s^\star + \epsilon_u$ for a scalar alignment coefficient $\beta$. For theoretical tractability, we assume the noise term $\epsilon_u$ is zero-mean and almost surely orthogonal to $g_s^\star$. This assumption is consistent with the empirical density in the right panel of Figure \ref{fig:head}, where OOD gradients behave predominantly as orthogonal stochastic noise. Any persistent negative alignment can be absorbed into $\beta$ as a negative contribution, so the single-step projection analysis still captures the destructive component that VLR removes.

Under this model, we evaluate the expected progress along the supervised direction $\hat{g}_s^\star$ for three distinct strategies. A strategy of hard rejection, representing filtering and feature starvation, yields an expected progress equal to $\|g_s^\star\|_2$. A naive utilization strategy, representing the risk of poisoning, yields an expected progress of $(1+\lambda_u \mathbb{E}[\beta])\|g_s^\star\|_2$. Finally, our GGR approach yields an expected progress of $(1+\lambda_u \mathbb{E}[\max(0,\beta)])\|g_s^\star\|_2$.

This formulation reveals a strict inequality when $\lambda_u>0$ and the probability of $\beta>0$ is strictly greater than zero. Specifically, when the expected value of $\beta$ is negative, naive utilization performs strictly worse than filtering. Within this stylized model, GGR matches or exceeds the filtering baseline in expectation, while preserving positive aligned components and removing destructive ones. Appendix~\ref{app:pcgrad} further contrasts this asymmetric anchor with symmetric PCGrad.


\section{Experiments}

\subsection{Experimental Setup}

\textbf{Benchmarks and Datasets.}
We evaluate our method on standard image classification SSL benchmarks under the open-set setting, following prior OSSL works \cite{iomatch,dac,openmatch,uagreg,scomatch}. 
Specifically, we construct OSSL tasks on CIFAR-10/100 \cite{cifar} and ImageNet \cite{imagenet}, where the labeled set contains $K$ ID classes while the unlabeled pool is a mixture of ID and OOD samples from unknown classes.
We adopt the class-splitting protocol from Kong \etal~\cite{dac} to define seen/unseen class partitions, and evaluate multiple settings by varying (i) the number of seen/unseen classes and (ii) the amount of labeled data per seen class.

\begin{table*}[tbp]
  \centering
  \caption{Closed-set classification accuracy on CIFAR-10/100 with varying seen/unseen class splits and labeled set sizes.}
  \scalebox{0.65}{
  \setlength{\tabcolsep}{0.5mm}{
    \begin{tabular}{c|c ccc ccc ccc}
    \toprule
    \multicolumn{2}{c}{Dataset} 
      & \multicolumn{3}{c}{CIFAR-10} 
      & \multicolumn{6}{c}{CIFAR-100} \\
    \cmidrule(r){1-2} \cmidrule(r){3-5} \cmidrule(r){6-11}

    \multicolumn{2}{c}{Class split} 
      & \multicolumn{3}{c}{6 / 4} 
      & \multicolumn{3}{c}{20 / 80} 
      & \multicolumn{3}{c}{50 / 50} \\
    \cmidrule(r){1-2} \cmidrule(r){3-5} \cmidrule(r){6-8} \cmidrule{9-11}

    \multicolumn{2}{c}{No. of labeled samples}
      & 5 & 10 & 25
      & 5 & 10 & 25
      & 5 & 10 & 25 \\
    \midrule
    
\multicolumn{1}{c}{}
& Fully Supervised
& 26.72\scalebox{0.8}{$\pm$1.74} & 
  29.30\scalebox{0.8}{$\pm$1.30} & 
  38.33\scalebox{0.8}{$\pm$3.41} & 
  22.72\scalebox{0.8}{$\pm$1.86} & 
  29.00\scalebox{0.8}{$\pm$2.02} & 
  43.18\scalebox{0.8}{$\pm$1.26} &
  18.16\scalebox{0.8}{$\pm$0.29} & 
  26.01\scalebox{0.8}{$\pm$0.52} & 
  44.05\scalebox{0.8}{$\pm$1.87} \\ 

\midrule
\multicolumn{1}{c}{} & MixMatch (NIPS'19)
& 37.95\scalebox{0.8}{$\pm$3.96} & 
  44.27\scalebox{0.8}{$\pm$2.27} & 
  58.24\scalebox{0.8}{$\pm$0.81} & 
  32.47\scalebox{0.8}{$\pm$1.91} & 
  44.48\scalebox{0.8}{$\pm$1.87} & 
  55.82\scalebox{0.8}{$\pm$1.15} & 
  30.45\scalebox{0.8}{$\pm$0.23} & 
  41.99\scalebox{0.8}{$\pm$0.51} & 
  53.59\scalebox{0.8}{$\pm$1.07} \\ 

\multicolumn{1}{c}{}
& FixMatch (NIPS'20)
& 83.65\scalebox{0.8}{$\pm$5.03} 
& 90.74\scalebox{0.8}{$\pm$1.93} 
& 93.47\scalebox{0.8}{$\pm$0.08}
& 47.68\scalebox{0.8}{$\pm$3.21} 
& 55.87\scalebox{0.8}{$\pm$1.13} 
& 66.75\scalebox{0.8}{$\pm$0.92}
& 52.31\scalebox{0.8}{$\pm$2.95} 
& 62.38\scalebox{0.8}{$\pm$0.24} 
& 69.03\scalebox{0.8}{$\pm$0.69} \\

\multicolumn{1}{c}{}
& CoMatch (ICCV'21)
& 84.83\scalebox{0.8}{$\pm$4.11} 
& 89.86\scalebox{0.8}{$\pm$2.37} 
& 92.78\scalebox{0.8}{$\pm$0.47}
& 51.20\scalebox{0.8}{$\pm$2.90} 
& 61.05\scalebox{0.8}{$\pm$1.81} 
& 67.47\scalebox{0.8}{$\pm$0.86}
& 46.79\scalebox{0.8}{$\pm$0.78} 
& 59.81\scalebox{0.8}{$\pm$0.50} 
& 69.94\scalebox{0.8}{$\pm$0.42} \\

\multicolumn{1}{c}{}
& SimMatch (CVPR'22)
& 79.30\scalebox{0.8}{$\pm$4.99} 
& 86.31\scalebox{0.8}{$\pm$3.42} 
& 90.74\scalebox{0.8}{$\pm$1.75}
& 49.58\scalebox{0.8}{$\pm$0.09} 
& 56.72\scalebox{0.8}{$\pm$0.75} 
& 65.97\scalebox{0.8}{$\pm$0.33}
& 55.09\scalebox{0.8}{$\pm$1.42} 
& 65.15\scalebox{0.8}{$\pm$0.48} 
& 70.23\scalebox{0.8}{$\pm$0.40} \\

\multicolumn{1}{c}{}
& SoftMatch (ICLR'23)
& 80.51\scalebox{0.8}{$\pm$5.21} 
& 83.18\scalebox{0.8}{$\pm$1.69} 
& 86.45\scalebox{0.8}{$\pm$1.49}
& 53.17\scalebox{0.8}{$\pm$2.05} 
& 58.88\scalebox{0.8}{$\pm$1.16} 
& 66.82\scalebox{0.8}{$\pm$0.52}
& 55.56\scalebox{0.8}{$\pm$0.28} 
& 63.39\scalebox{0.8}{$\pm$0.97} 
& 69.05\scalebox{0.8}{$\pm$0.51} \\

\multicolumn{1}{c}{} & MTCF (ECCV'20)
& 40.66\scalebox{0.8}{$\pm$5.07} 
& 45.62\scalebox{0.8}{$\pm$1.75} 
& 56.04\scalebox{0.8}{$\pm$2.13}
& 33.95\scalebox{0.8}{$\pm$2.21} 
& 46.20\scalebox{0.8}{$\pm$1.59} 
& 57.37\scalebox{0.8}{$\pm$1.47}
& 30.73\scalebox{0.8}{$\pm$0.38} 
& 43.29\scalebox{0.8}{$\pm$0.22} 
& 54.98\scalebox{0.8}{$\pm$0.90} \\

\multicolumn{1}{c}{}
& Safe-Stu (CVPR'22)
& 50.16\scalebox{0.8}{$\pm$6.43} 
& 61.21\scalebox{0.8}{$\pm$9.17} 
& 82.27\scalebox{0.8}{$\pm$0.77}
& 43.70\scalebox{0.8}{$\pm$3.80} 
& 53.90\scalebox{0.8}{$\pm$1.23} 
& 61.52\scalebox{0.8}{$\pm$0.92}
& 42.97\scalebox{0.8}{$\pm$0.90} 
& 55.83\scalebox{0.8}{$\pm$0.84} 
& 66.17\scalebox{0.8}{$\pm$0.65} \\

\multicolumn{1}{c}{}
& UAGreg (AAAI'24)
& 85.78\scalebox{0.8}{$\pm$2.27} 
& 87.11\scalebox{0.8}{$\pm$2.12} 
& 91.48\scalebox{0.8}{$\pm$0.37}
& 57.92\scalebox{0.8}{$\pm$0.76} 
& 64.02\scalebox{0.8}{$\pm$1.39} 
& 68.77\scalebox{0.8}{$\pm$1.11}
& 60.71\scalebox{0.8}{$\pm$0.99} 
& 66.19\scalebox{0.8}{$\pm$0.33} 
& 71.20\scalebox{0.8}{$\pm$0.28} \\

\midrule

\multicolumn{1}{c}{}
& OpenMatch (NIPS'21)
& 47.26\scalebox{0.8}{$\pm$1.20} 
& 51.21\scalebox{0.8}{$\pm$1.82} 
& 64.73\scalebox{0.8}{$\pm$1.78}
& 43.72\scalebox{0.8}{$\pm$1.24} 
& 55.78\scalebox{0.8}{$\pm$0.02} 
& 63.78\scalebox{0.8}{$\pm$1.01}
& 44.55\scalebox{0.8}{$\pm$1.56} 
& 56.19\scalebox{0.8}{$\pm$1.53} 
& 65.65\scalebox{0.8}{$\pm$0.52} \\

\rowcolor{blue!10}
\multicolumn{1}{c}{}
& \textbf{+ Ours}
& \textbf{47.38\scalebox{0.8}{$\pm$1.37}} 
& \textbf{52.75\scalebox{0.8}{$\pm$2.65}} 
& \textbf{65.23\scalebox{0.8}{$\pm$1.88}}
& \textbf{43.73\scalebox{0.8}{$\pm$0.70}} 
& \textbf{55.82\scalebox{0.8}{$\pm$0.59}} 
& \textbf{64.42\scalebox{0.8}{$\pm$0.90}} 
& \textbf{45.11\scalebox{0.8}{$\pm$0.73}}
& \textbf{56.45\scalebox{0.8}{$\pm$0.62}}  
& \textbf{66.36\scalebox{0.8}{$\pm$0.92}} \\

\midrule

\multicolumn{1}{c}{}
& IOMatch (ICCV'23)
& 89.87\scalebox{0.8}{$\pm$1.26} 
& 91.16\scalebox{0.8}{$\pm$1.47} 
& 93.61\scalebox{0.8}{$\pm$0.29}
& 56.48\scalebox{0.8}{$\pm$1.86} 
& 63.00\scalebox{0.8}{$\pm$1.51} 
& 67.32\scalebox{0.8}{$\pm$0.89}
& 59.49\scalebox{0.8}{$\pm$0.99} 
& 65.33\scalebox{0.8}{$\pm$1.11} 
& 70.43\scalebox{0.8}{$\pm$1.29} \\

\rowcolor{blue!10}
\multicolumn{1}{c}{}
& \textbf{+ Ours}
& \textbf{91.87\scalebox{0.8}{$\pm$1.44}} 
& \textbf{92.62\scalebox{0.8}{$\pm$0.93}} 
& \textbf{93.64\scalebox{0.8}{$\pm$0.49}}
& \textbf{56.75\scalebox{0.8}{$\pm$1.45}} 
& \textbf{63.18\scalebox{0.8}{$\pm$1.21}} 
& \textbf{67.63\scalebox{0.8}{$\pm$0.30}}
& \textbf{60.61\scalebox{0.8}{$\pm$0.89}} 
& \textbf{65.94\scalebox{0.8}{$\pm$0.47}} 
& \textbf{70.89\scalebox{0.8}{$\pm$0.33}} \\

\midrule

\multicolumn{1}{c}{}
& DAC (TNNLS'25)
& 86.69\scalebox{0.8}{$\pm$3.74} 
& 91.66\scalebox{0.8}{$\pm$0.90} 
& 93.02\scalebox{0.8}{$\pm$1.23}
& 53.30\scalebox{0.8}{$\pm$1.45} 
& 58.42\scalebox{0.8}{$\pm$0.56} 
& 67.72\scalebox{0.8}{$\pm$0.65}
& 56.94\scalebox{0.8}{$\pm$0.77} 
& 64.84\scalebox{0.8}{$\pm$1.10} 
& 70.60\scalebox{0.8}{$\pm$0.44} \\

\rowcolor{blue!10}
\multicolumn{1}{c}{}
& \textbf{+ Ours}
& \textbf{87.63\scalebox{0.8}{$\pm$2.71}} 
& \textbf{92.56\scalebox{0.8}{$\pm$1.02}} 
& \textbf{93.25\scalebox{0.8}{$\pm$0.79}}
& \textbf{53.45\scalebox{0.8}{$\pm$3.40}} 
& \textbf{60.10\scalebox{0.8}{$\pm$2.64}} 
& \textbf{68.03\scalebox{0.8}{$\pm$0.76}} 
& \textbf{57.19\scalebox{0.8}{$\pm$0.25}}
& \textbf{65.45\scalebox{0.8}{$\pm$0.20}}  
& \textbf{70.68\scalebox{0.8}{$\pm$0.26}} \\
\bottomrule
\end{tabular}%
}}
\label{tab:cifar-closedset}
\end{table*}

\textbf{Baselines.}
We compare our approach against a comprehensive suite of state-of-the-art methods, categorized into two groups:
(1) Standard SSL methods, including MixMatch \cite{mixmatch}, FixMatch \cite{fixmatch}, CoMatch \cite{comatch}, SimMatch \cite{simmatch}, and SoftMatch \cite{softmatch}. Following prior work \cite{iomatch,dac}, we exclude earlier deep SSL methods that often underperform labeled-only baselines under OOD contamination.
(2) Open-Set SSL methods, including MTCF \cite{mtcf}, OpenMatch \cite{openmatch}, Safe-Student \cite{safestudent}, IOMatch \cite{iomatch}, UAGreg \cite{uagreg}, DAC \cite{dac}.

\textbf{Evaluation Metrics.}
For closed-set evaluation, our primary metric is closed-set accuracy on the test set restricted to seen classes.
This measures the core goal of OSSL: leveraging an unlabeled pool contaminated with outliers to improve ID classification.
For open-set evaluation, we additionally evaluate open-set classification on the test set that contains both seen and unseen classes.
Following prior OSSL works \cite{iomatch,dac}, we treat all unseen classes as a single unknown category.
Because the open-set test distribution is often class-imbalanced, we adopt Balanced Accuracy (BA) \cite{balanced_accuracy} as the open-set metric.
For each method, open-set BA is computed using the checkpoint selected by the best closed-set performance, ensuring consistent model selection across methods.

\textbf{Fairness of comparisons.}
To ensure rigorous fairness and reproducibility, all methods are implemented and evaluated within the USB codebase \cite{usb}. We maintain strict consistency across all comparisons by using identical data splits, augmentation policies, and common hyperparameters. Regarding backbone architectures, we employ WideResNet-28-2 \cite{wideresnet} for CIFAR benchmarks and ResNet-18 \cite{resnet} for ImageNet \cite{usb}. Baseline-specific hyperparameters are initialized from original papers and tuned on the validation split, allowing us to successfully reproduce results consistent with the original reports \cite{iomatch,dac}. Finally, to avoid unfair comparisons due to varying convergence speeds, we report the performance of the best checkpoint selected based on the closed-set validation accuracy.

\begin{table*}[tbp]
    
  \centering
  \caption{Open-set classification accuracy on CIFAR-10/100 with varying seen/unseen class splits and labeled set sizes.}
  \scalebox{0.65}{
  \setlength{\tabcolsep}{0.5mm}{
    \begin{tabular}{c|c ccc ccc ccc}
    \toprule
    \multicolumn{2}{c}{Dataset} 
      & \multicolumn{3}{c}{CIFAR-10} 
      & \multicolumn{6}{c}{CIFAR-100} \\
    \cmidrule(r){1-2} \cmidrule(r){3-5} \cmidrule(r){6-11}

    \multicolumn{2}{c}{Class split} 
      & \multicolumn{3}{c}{6 / 4} 
      & \multicolumn{3}{c}{20 / 80} 
      & \multicolumn{3}{c}{50 / 50} \\
    \cmidrule(r){1-2} \cmidrule(r){3-5} \cmidrule(r){6-8} \cmidrule{9-11}

    \multicolumn{2}{c}{No. of labeled samples}
      & 5 & 10 & 25
      & 5 & 10 & 25
      & 5 & 10 & 25 \\
    \midrule

\multicolumn{1}{c}{}
& IOMatch (ICCV'23)
& 72.50\scalebox{0.8}{$\pm$1.05} & 74.11\scalebox{0.8}{$\pm$1.20} & 76.83\scalebox{0.8}{$\pm$0.38}
& 45.71\scalebox{0.8}{$\pm$1.11} & 51.59\scalebox{0.8}{$\pm$0.35} & 56.71\scalebox{0.8}{$\pm$0.42}
& 48.74\scalebox{0.8}{$\pm$1.50} & 54.02\scalebox{0.8}{$\pm$0.93} & 59.80\scalebox{0.8}{$\pm$1.02} \\

\rowcolor{blue!10}
\multicolumn{1}{c}{}
& \textbf{+ Ours}
& \textbf{74.13\scalebox{0.8}{$\pm$1.64}} & \textbf{75.58\scalebox{0.8}{$\pm$0.53}} & \textbf{76.86\scalebox{0.8}{$\pm$0.43}}
& \textbf{45.73\scalebox{0.8}{$\pm$0.67}} & \textbf{51.68\scalebox{0.8}{$\pm$0.21}} & \textbf{56.93\scalebox{0.8}{$\pm$0.35}}
& \textbf{49.93\scalebox{0.8}{$\pm$0.95}} & \textbf{54.87\scalebox{0.8}{$\pm$0.11}} & \textbf{60.18\scalebox{0.8}{$\pm$0.33}} \\

\midrule

\multicolumn{1}{c}{}
& DAC (TNNLS'25)
& 72.02\scalebox{0.8}{$\pm$3.07} & 76.26\scalebox{0.8}{$\pm$0.82} & 77.48\scalebox{0.8}{$\pm$0.85}
& 49.41\scalebox{0.8}{$\pm$0.54} & 55.18\scalebox{0.8}{$\pm$0.36} & 64.34\scalebox{0.8}{$\pm$0.45}
& 54.65\scalebox{0.8}{$\pm$0.08} & 62.79\scalebox{0.8}{$\pm$1.51} & 68.10\scalebox{0.8}{$\pm$0.10} \\

\rowcolor{blue!10}
\multicolumn{1}{c}{}
& \textbf{+ Ours}
& \textbf{73.72\scalebox{0.8}{$\pm$2.07}} & \textbf{77.19\scalebox{0.8}{$\pm$0.69}} & \textbf{78.66\scalebox{0.8}{$\pm$1.61}}
& 48.94\scalebox{0.8}{$\pm$3.81} & \textbf{57.01\scalebox{0.8}{$\pm$2.94}} & \textbf{64.55\scalebox{0.8}{$\pm$0.80}}
& 53.77\scalebox{0.8}{$\pm$0.31} & \textbf{63.15\scalebox{0.8}{$\pm$0.17}} & \textbf{68.15\scalebox{0.8}{$\pm$1.18}} \\
\bottomrule
\end{tabular}%
}}
\label{tab:cifar-openset}
\vspace{-0.2cm}
\end{table*}

\textbf{Implementation details.}
For our GGR framework, we adopt a unified hyperparameter configuration across all main experiments: we use VLR as the default rectification mode. To balance computational efficiency and representational stability, the gradient surgery scope ($\mathcal{P}$) is applied to the encoder backbone, meaning task-specific heads are updated via standard gradients. All method-specific hyperparameters of the base OSSL algorithms are kept identical to their baseline settings so that the effect of gradient rectification can be isolated as clearly as possible. All reported results are the mean and standard deviation over 3 independent runs with different random seeds.

\subsection{Main Results}
\label{sec:main_results}

We evaluate GGR on CIFAR-10/100 and ImageNet-30 under standard open-set splits and label budgets, reporting (i) closed-set accuracy on the seen-class test set and (ii) open-set BA on the mixed test set.

\textbf{CIFAR-10/100.}
Tables~\ref{tab:cifar-closedset} and~\ref{tab:cifar-openset} summarize the closed-set and open-set performance on CIFAR-10/100.
As a plug-in module, GGR improves closed-set and open-set accuracy in most settings when combined with representative filtering-based and utilization/disagreement-based methods, including OpenMatch, IOMatch, and DAC. The gains are generally larger in more challenging regimes with scarce labels and large unseen partitions. For example, relative to the corresponding baselines, GGR improves closed-set accuracy by up to +2.00 points and open-set accuracy by up to +1.83 points in the low-label settings. These results are consistent with our hypothesis that controlling destructive gradient interference can benefit OSSL, especially when pseudo-label noise and ID/OOD ambiguity are severe. For brevity, we denote the CIFAR-10 tasks with 6 labeled classes and 5, 10, or 25 labeled samples per class as CIFAR-6-30, CIFAR-6-60, and CIFAR-6-150, respectively.

\textbf{ImageNet-30.}
Table \ref{tab:in30} reports results on ImageNet-30 with the 20/10 split under 1\% and 5\% labeled ratios.
Despite the increased scale and visual diversity, GGR improves strong OSSL baselines across both filtering and utilization settings on ImageNet-30.
The gains are larger under the 1\% labeled regime, where pseudo-label noise and ID/OOD ambiguity are more severe.
These results suggest that optimization-level rectification is complementary to existing detection heuristics and can remain useful beyond small benchmarks.

Taken together, these results indicate that rectifying auxiliary gradients in a geometric manner can improve OSSL while enhancing both ID generalization and open-set recognition without relying solely on hard sample rejection.

\begin{table}[tb]
  \centering
  \caption{Closed-set and open-set accuracy on ImageNet-30 with the class split of $20/10$.}
  \scalebox{0.92}{
  \setlength{\tabcolsep}{1.15mm}{
    \begin{tabular}{cccccc}
    \toprule
    \multicolumn{2}{c}{Evaluation} & \multicolumn{2}{c}{Closed-Set} & \multicolumn{2}{c}{Open-Set} \\
    \cmidrule(r){1-2} \cmidrule(r){3-4} \cmidrule(r){5-6}
    \multicolumn{2}{c}{Labeled ratio} & 1\% & 5\% & 1\% & 5\% \\
    \midrule
    \multicolumn{2}{c}{Fully Supervised}   & 31.87\scalebox{0.8}{$\pm$0.97} & 56.08\scalebox{0.8}{$\pm$0.88} & -- & -- \\
    \midrule
    \multicolumn{2}{c}{MixMatch (NIPS'19)} & 51.42\scalebox{0.8}{$\pm$1.27} & 76.97\scalebox{0.8}{$\pm$0.65} & -- & -- \\
    \multicolumn{2}{c}{FixMatch (NIPS'20)} & 73.92\scalebox{0.8}{$\pm$1.74} & 88.82\scalebox{0.8}{$\pm$0.88} & -- & -- \\
    \multicolumn{2}{c}{CoMatch (ICCV'21)}  & 83.72\scalebox{0.8}{$\pm$0.84} & 91.17\scalebox{0.8}{$\pm$0.48} & -- & -- \\
    \multicolumn{2}{c}{SimMatch (CVPR'22)} & 82.78\scalebox{0.8}{$\pm$1.59} & 91.88\scalebox{0.8}{$\pm$0.48} & -- & -- \\
    \multicolumn{2}{c}{SoftMatch (ICLR'23)} & 81.18\scalebox{0.8}{$\pm$3.19} & 90.00\scalebox{0.8}{$\pm$0.23} & -- & -- \\
    \multicolumn{2}{c}{MTCF (ECCV'20)} & 49.33\scalebox{0.8}{$\pm$1.50} & 71.62\scalebox{0.8}{$\pm$0.46} & -- & -- \\
    \multicolumn{2}{c}{Safe-Student (CVPR'22)} & 71.47\scalebox{0.8}{$\pm$2.91} & 88.42\scalebox{0.8}{$\pm$0.06} & -- & -- \\
    \multicolumn{2}{c}{UAGreg (AAAI'24)} & 80.28\scalebox{0.8}{$\pm$0.35} & 91.02\scalebox{0.8}{$\pm$0.34} & -- & -- \\
    
    \midrule
    
    \multicolumn{2}{c}{OpenMatch (NIPS'21)} & 58.55\scalebox{0.8}{$\pm$2.10} & 86.72\scalebox{0.8}{$\pm$0.91} & 14.75\scalebox{0.8}{$\pm$1.18} & 61.51\scalebox{0.8}{$\pm$3.63} \\
    
    \rowcolor{blue!10}
    \multicolumn{2}{c}{+ Ours} & \textbf{59.15\scalebox{0.8}{$\pm$1.00}} & \textbf{87.43\scalebox{0.8}{$\pm$0.50}} & \textbf{15.56\scalebox{0.8}{$\pm$2.38}} & \textbf{68.15\scalebox{0.8}{$\pm$1.25}} \\
    
    \midrule
    
    \multicolumn{2}{c}{IOMatch (ICCV'23)} & 85.18\scalebox{0.8}{$\pm$1.80} & 90.32\scalebox{0.8}{$\pm$0.52} & 74.88\scalebox{0.8}{$\pm$1.56} & 81.03\scalebox{0.8}{$\pm$1.01} \\
    
    \rowcolor{blue!10}
    \multicolumn{2}{c}{+ Ours} & \textbf{85.90\scalebox{0.8}{$\pm$1.07}} & \textbf{90.82\scalebox{0.8}{$\pm$0.58}} & \textbf{75.46\scalebox{0.8}{$\pm$0.60}} & \textbf{82.14\scalebox{0.8}{$\pm$1.28}} \\
    
    \midrule
    
    \multicolumn{2}{c}{DAC (TNNLS'25)} & 86.40\scalebox{0.8}{$\pm$0.08} & 93.25\scalebox{0.8}{$\pm$0.20} & 77.78\scalebox{0.8}{$\pm$2.10} & 87.81\scalebox{0.8}{$\pm$1.65} \\
    
    \rowcolor{blue!10}
    \multicolumn{2}{c}{+ Ours} & \textbf{87.40\scalebox{0.8}{$\pm$0.15}} & \textbf{93.33\scalebox{0.8}{$\pm$0.29}} & \textbf{78.40\scalebox{0.8}{$\pm$3.62}} & \textbf{88.74\scalebox{0.8}{$\pm$1.27}} \\
    
    \bottomrule
    \end{tabular}}
  }
  \label{tab:in30}
  \vspace{-0.2cm}
\end{table}

\subsection{Ablation Studies and Further Analysis} \label{sec:ablation}

\textbf{Rectification via subspaces.} While VLR serves as a hyperparameter-free and efficient default rectifier at the vector level, it anchors updates dynamically to the current batch gradient $g_{s}$. In regimes with limited labels, this single-step estimate exhibits high variance, leaving the rectification susceptible to inaccurate corrections. To address this, we explore subspace-aware rectification, which summarizes recent supervised gradients into a low-dimensional basis to provide temporal smoothing. We compare two variants evaluated across subspace dimensions $d \in \{1, 5, 10, 20, 30, 40, 50\}$ on the CIFAR-100 dataset with a 50/50 seen/unseen split under two label budgets.

As Figure~\ref{fig:subspace} shows, the two operators exhibit a conservativeness-utilization trade-off. OSR is the more conservative option: it projects the auxiliary gradient onto the orthogonal complement of the supervised subspace, completely eliminating any component within it. Empirically, this leads to more stable improvements in open-set balanced accuracy across a wide range of dimensions. Conversely, CSR performs a cone projection, clipping only negative coordinates while retaining positively aligned components. By preserving more auxiliary signals, it can achieve higher peak closed-set accuracy (e.g., 58.47 at $d=10$). However, it is also more sensitive to the quality of the estimated basis; misaligned positive components can still induce destructive interference, resulting in higher variance and non-monotonic performance across different dimensions.

Selecting a moderate subspace dimension $d$ is important for balancing conservativeness and utilization. Small dimensions (e.g., $d \in \{1, 5\}$) provide only a limited anchor and may fail to sufficiently prevent gradient conflicts under extreme label scarcity. Conversely, excessively large dimensions capture too many supervised descent directions and can suppress beneficial unsupervised signals. This over-constraining effect is particularly noticeable in easier regimes (e.g., CIFAR100-50-1250), where auxiliary guidance is already relatively reliable. Overall, a moderate $d$ appears to capture the primary descent directions without unnecessarily discarding useful representation signals.

\begin{table}[tbp]
  \centering
  \caption{Ablation study on the rectification scope. Comparison of performance when GGR is applied to the encoder backbone, task-specific heads, or both modules.}
  \label{tab:ablation}
  
  \begin{minipage}[t]{0.48\textwidth}
    \centering
    \resizebox{\linewidth}{!}{
      \begin{tabular}{lcc}
        \toprule
        \multicolumn{3}{c}{\textbf{CIFAR10-6-30}} \\
        \cmidrule{1-3}
        Method & Closed-Set & Open-Set \\
        \midrule
        DAC               & 86.69\scalebox{0.8}{$\pm$3.74} & 72.02\scalebox{0.8}{$\pm$3.07} \\
        + Ours (backbone) & 87.63\scalebox{0.8}{$\pm$2.71} & 73.72\scalebox{0.8}{$\pm$2.07} \\
        + Ours (head)     & \textbf{90.38}\scalebox{0.8}{$\pm$0.43} & \textbf{75.24}\scalebox{0.8}{$\pm$0.35} \\
        + Ours (both)     & \underline{88.47}\scalebox{0.8}{$\pm$3.02} & \underline{75.06}\scalebox{0.8}{$\pm$3.55} \\
        \midrule
        IOMatch           & 89.87\scalebox{0.8}{$\pm$1.26} & 72.50\scalebox{0.8}{$\pm$1.05} \\
        + Ours (backbone) & \textbf{91.87}\scalebox{0.8}{$\pm$1.44} & \textbf{74.13}\scalebox{0.8}{$\pm$1.64} \\
        + Ours (head)     & \underline{90.34}\scalebox{0.8}{$\pm$0.49} & \underline{72.61}\scalebox{0.8}{$\pm$0.33} \\
        + Ours (both)     & 89.56\scalebox{0.8}{$\pm$0.84} & 71.99\scalebox{0.8}{$\pm$0.88} \\
        \bottomrule
      \end{tabular}
    }
  \end{minipage}
  \hfill
  \begin{minipage}[t]{0.48\textwidth}
    \centering
    \resizebox{\linewidth}{!}{
      \begin{tabular}{lcc}
        \toprule
        \multicolumn{3}{c}{\textbf{CIFAR100-50-250}} \\
        \cmidrule{1-3}
        Method & Closed-Set & Open-Set \\
        \midrule
        DAC               & 56.94\scalebox{0.8}{$\pm$0.77} & 54.65\scalebox{0.8}{$\pm$0.07} \\
        + Ours (backbone) & 57.19\scalebox{0.8}{$\pm$0.25} & 53.76\scalebox{0.8}{$\pm$0.31} \\
        + Ours (head)     & \underline{58.04}\scalebox{0.8}{$\pm$0.65} & \underline{55.15}\scalebox{0.8}{$\pm$0.84} \\
        + Ours (both)     & \textbf{58.50}\scalebox{0.8}{$\pm$1.91} & \textbf{55.46}\scalebox{0.8}{$\pm$2.43} \\
        \midrule
        IOMatch           & 59.49\scalebox{0.8}{$\pm$0.99} & 48.74\scalebox{0.8}{$\pm$1.50} \\
        + Ours (backbone) & \textbf{60.61}\scalebox{0.8}{$\pm$0.89} & \textbf{49.93}\scalebox{0.8}{$\pm$0.95} \\
        + Ours (head)     & \underline{60.38}\scalebox{0.8}{$\pm$0.50} & \underline{49.69}\scalebox{0.8}{$\pm$0.60} \\
        + Ours (both)     & 59.56\scalebox{0.8}{$\pm$0.68} & 48.91\scalebox{0.8}{$\pm$0.43} \\
        \bottomrule
      \end{tabular}
    }
  \end{minipage}
  
\end{table}

\begin{figure}[htbp]
    \centering
    \includegraphics[width=0.99\linewidth]{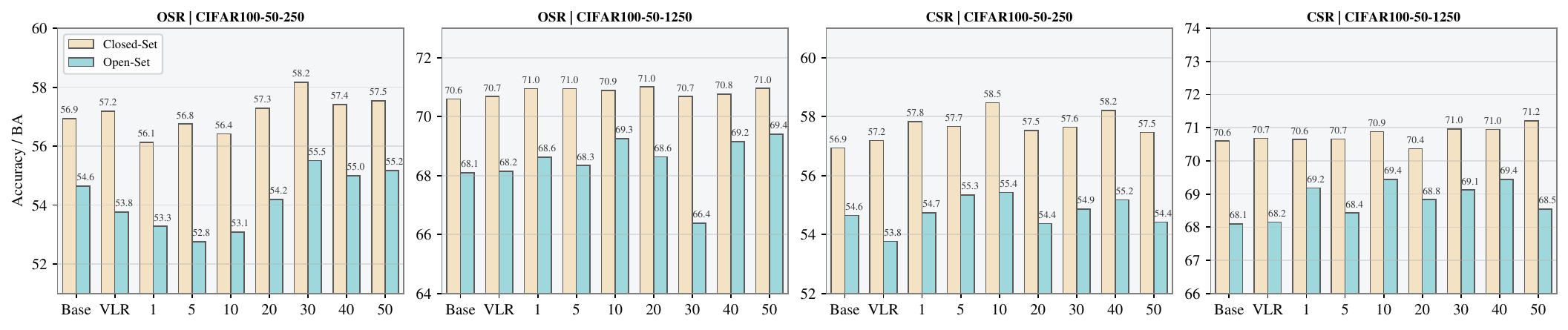}
    \caption{Effect of subspace-aware rectification on CIFAR-100  across varying subspace dimensions $d$. We compare OSR and CSR projection variants under two label budgets.}
    \label{fig:subspace}
\end{figure}

In summary, VLR is an efficient default, while subspace anchoring can provide more stable behavior and temporal smoothing. The orthogonal variant offers steadier open-set improvements across dimensions, whereas the signed variant can yield slightly higher peak performance at the cost of greater sensitivity to hyperparameter tuning.

\textbf{Gradient-conflict diagnostics.} To empirically validate our optimization-level non-interference principle, we track gradient alignment during training. We define a \emph{conflict event} as occurring when the auxiliary gradient ($g_u^{(t)}$) directly opposes the supervised direction ($g_s^{(t)}$), satisfying $\langle g_s^{(t)}, g_u^{(t)}\rangle < 0$. Figure~\ref{fig:conflict_regret} analyzes this hazard across three metrics:
\textbf{First}, the raw conflict rate (left) tracks the moving-average frequency of conflicts before rectification. The baseline's non-trivial rate indicates that auxiliary objectives frequently generate destructive interference. Because GGR operates post-hoc in gradient space, it does not remove this underlying loss-level hazard; instead, it rectifies the update before the parameter step.
\textbf{Second}, the residual conflict rate (middle) evaluates the actual applied update $g_{\mathrm{app}}^{(t)}$. While the baseline suffers from unmitigated conflicts, GGR drives this rate close to zero. This is consistent with our first-order non-interference analysis on the rectified coordinate block: the applied auxiliary update no longer pushes against supervised progress there.
\textbf{Finally}, the cumulative conflict regret (right), measured as $\sum_{t}\max(0,-\langle g_s^{(t)}, g_{\mathrm{app}}^{(t)}\rangle)$, captures the accumulated optimization damage over time. The baseline accumulates substantially more regret from counterproductive updates, whereas GGR remains nearly flat. Together, these diagnostics are consistent with the view that geometric rectification suppresses harmful update directions before they are applied.

\begin{figure}[tbp]
    \centering
    \includegraphics[width=0.99\linewidth]{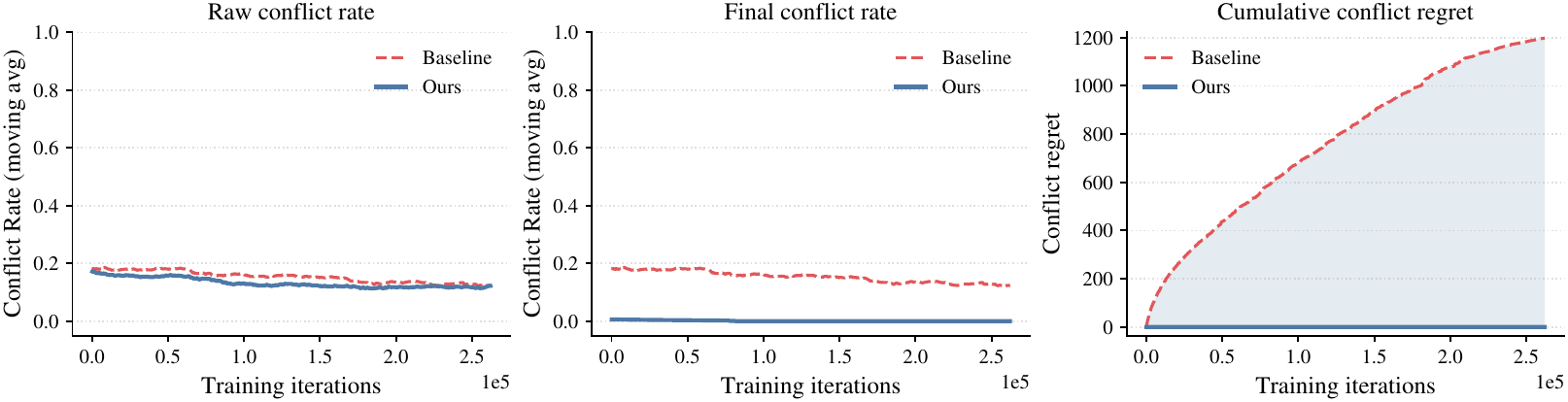}
    \caption{\textbf{Conflict diagnostics during training.} Tracking the occurrence and impact of gradient conflicts between the baseline and our GGR framework.}
    \label{fig:conflict_regret}
\end{figure}

\textbf{Surgery Scope.}
We investigate where to apply GGR by restricting the rectification to
(i) the encoder backbone, (ii) the non-backbone modules (all parameters outside the backbone; denoted as head for brevity), or (iii) both (all parameters).
Table \ref{tab:ablation} reports results on CIFAR10-6-30 and CIFAR100-50-250 for two representative OSSL paradigms.
For DAC, applying GGR to the head yields the largest gain on CIFAR10-6-30 in both closed-set accuracy and open-set BA,
while rectifying both backbone and head performs best on the more challenging CIFAR100-50-250 setting.
This suggests that, under DAC-style objectives with multiple auxiliary heads, gradient conflicts are not confined to the final decision boundary:
mitigation at auxiliary branches is crucial, and full-model rectification becomes increasingly beneficial as the class mismatch grows.
For IOMatch, rectifying the backbone already provides consistent improvements over the baseline across both datasets,
indicating that conflict-free updates can directly benefit representation learning in utilization-based OSSL.
Overall, these results support that the optimal rectification scope can depend on the base algorithm and task difficulty,
and applying GGR to either the backbone or the entire model offers a robust default choice.

\textbf{Efficiency and cost analysis.}
Table~\ref{tab:efficiency} reports the wall-clock time and memory cost of GGR on top of DAC. VLR adds only a modest memory increase and about $11\%$ time overhead over the baseline. With a subspace dimension of $d=10$, OSR keeps the overhead within $\sim$$1\%$ and CSR within $\sim$$11\%$, while adding only a few hundred megabytes of memory. Increasing $d$ to $50$ further raises the cost without proportional accuracy gains, so VLR and the $d=10$ subspace variants serve as the practical defaults, balancing efficiency with the accuracy benefits analyzed above.

\begin{table}[tbp]
\centering
\caption{Computational cost and efficiency analysis of GGR. The relative time indicates the overhead compared to the baseline.}
\label{tab:efficiency}
\resizebox{0.95\linewidth}{!}{
\begin{tabular}{lccccc}
\toprule
\textbf{Method} & \textbf{Dim} & \textbf{Time(ms)}&  \textbf{Memory(MB)} & \textbf{Relative Time}  & \textbf{Accuracy(\%)}\\
\midrule
DAC    & --      & 220 & 9018 & 1.00& 56.06 \\
\midrule
+ GGR (VLR) & -- & 244 & 9336 & 1.11 &  56.90 \\
+ GGR (OSR) & 10    & 223 & 9336 & 1.01 &  56.60\\
+ GGR (OSR)  & 50   & 293 & 9572 &  1.33 & 57.06\\
+ GGR (CSR)  & 10   & 245 & 9336 & 1.11 & \underline{57.26}\\
+ GGR (CSR) & 50    & 316 & 9572 & 1.44 &  \textbf{57.38}\\
\bottomrule
\end{tabular}
}
\end{table}


\section{Limitations}
\label{app:limitations}

The preceding results are local and first-order. They show that the auxiliary update does not oppose the supervised descent direction at the current iterate, but they do not imply global optimality or monotonic decrease of the full composite objective \(\mathcal{L}_s + \lambda_u \mathcal{L}_u\). In addition, when the mini-batch supervised gradient is itself highly noisy or uninformative, the vector-level anchor may become conservative or unstable; the subspace variants partially mitigate this issue via temporal smoothing. Finally, GGR is complementary to, rather than a replacement for, better open-set modeling and pseudo-labeling. Stronger OOD detection or uncertainty estimation can further improve the usefulness of the rectified auxiliary signal.

\section{Conclusion}
In this paper, we study OSSL from the perspective of optimization geometry. We argue that errors in ID/OOD decisions can lead either to feature starvation or to destructive gradient interference. To address this issue, we propose GGR, a plug-in framework that shifts the focus from sample-level rejection to optimization-level control. By using the supervised gradient as an anchor, GGR projects conflicting auxiliary gradients onto an admissible region and makes the applied auxiliary update first-order non-opposing within the rectified coordinate block while preserving orthogonal signals that may remain useful. Theoretical analysis and empirical results show that GGR improves strong OSSL baselines in most settings, with larger gains in challenging low-label regimes. Overall, these results suggest that geometric control of the optimization trajectory is complementary to existing detection heuristics in open-set learning.

\section*{Acknowledgements}
This research was partially supported by Fundamental and Interdisciplinary Disciplines Breakthrough Plan of the Ministry of Education of China No.~JYB\allowbreak 2025\allowbreak XDXM601, ``Pioneer'' and ``Leading Goose'' R\&D Program of Zhejiang under Grant No.~2025\allowbreak C02120, and State Key Laboratory of Transvascular Implantation Devices under Grant No.~SKLTID\allowbreak 2024003.

\bibliographystyle{splncs04}
\bibliography{main}

\newpage
\appendix
\renewcommand{\theHsection}{appendix.\Alph{section}}
\renewcommand{\theHsubsection}{appendix.\Alph{section}.\arabic{subsection}}

\section{Proofs and Additional Theoretical Analysis}
\label{app:theory}

For clarity, throughout the appendix \(g_u\) denotes the raw auxiliary gradient, \(\tilde g_u\) denotes its rectified counterpart before weighting, and \(g_{\mathrm{app}}:=\lambda_u \tilde g_u\) denotes the weighted auxiliary update actually applied by the optimizer. For the default vector-level rectifier (VLR), we write
\[
\tilde g_u = \Pi_{\mathcal{H}_{\mathrm{safe}}(g_s)}(g_u),
\qquad
\mathcal{H}_{\mathrm{safe}}(g_s)
=
\{d \in \mathbb{R}^D : \langle d, g_s \rangle \ge 0\}.
\]

\subsection{Half-space projection and alignment identity}

\begin{proposition}[Closed-form projection onto the safe half-space]
\label{prop:halfspace}
For any \(g_s, g_u \in \mathbb{R}^D\), the Euclidean projection of \(g_u\) onto \(\mathcal{H}_{\mathrm{safe}}(g_s)\) is uniquely given by
\[
\Pi_{\mathcal{H}_{\mathrm{safe}}(g_s)}(g_u)
=
\begin{cases}
g_u - \dfrac{\langle g_u, g_s\rangle}{\|g_s\|_2^2} g_s,
& \text{if } \langle g_u, g_s\rangle < 0 \text{ and } \|g_s\|_2 > 0,\\[6pt]
g_u,
& \text{otherwise}.
\end{cases}
\]
\end{proposition}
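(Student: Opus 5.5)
We verify the candidate against the variational characterization of Euclidean projection onto a closed convex set. First note that $\mathcal{H}_{\mathrm{safe}}(g_s)$ is nonempty, closed, and convex: it is a closed half-space when $g_s\neq 0$ and all of $\mathbb{R}^D$ when $g_s=0$. Since $d\mapsto \tfrac12\|d-g_u\|_2^2$ is strongly convex, problem \eqref{eq:qp_halfspace} has a unique minimizer. This settles uniqueness once and for all. It then suffices to show that the displayed formula is feasible and satisfies the optimality condition $\langle g_u-p,\, d-p\rangle\le 0$ for all $d\in\mathcal{H}_{\mathrm{safe}}(g_s)$, where $p$ denotes the candidate.

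We split into cases. If $g_s=0$, the constraint is vacuous and $p=g_u$ is trivially optimal. If $g_s\neq 0$ and $\langle g_u,g_s\rangle\ge 0$, then $g_u$ is already feasible and attains objective value $0$, so $p=g_u$. The only substantive case is $g_s\neq 0$ and $\langle g_u,g_s\rangle<0$. Set $c:=\langle g_u,g_s\rangle/\|g_s\|_2^2<0$ and $p=g_u-c\,g_s$.

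Feasibility follows from $\langle p,g_s\rangle=\langle g_u,g_s\rangle-c\|g_s\|_2^2=0\ge 0$. For optimality, note $g_u-p=c\,g_s$. For any feasible $d$,
\[
\langle g_u-p,\,d-p\rangle=c\bigl(\langle g_s,d\rangle-\langle g_s,p\rangle\bigr)=c\,\langle g_s,d\rangle\le 0,
\]
because $c<0$ and $\langle g_s,d\rangle\ge 0$.

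As an alternative, we could invoke KKT conditions for \eqref{eq:qp_halfspace}. Stationarity gives $d-g_u-\mu g_s=0$ with $\mu\ge 0$, and complementary slackness gives $\mu\langle d,g_s\rangle=0$. The inactive case recovers $d=g_u$. The active case forces $\mu=-\langle g_u,g_s\rangle/\|g_s\|_2^2>0$, which reproduces the formula. Slater's condition holds, e.g.\ $d=g_s$ is strictly feasible, so KKT is sufficient.

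No step is a genuine obstacle. The only care needed is the degenerate case $g_s=0$, where the formula's division is undefined. The "otherwise" branch of the statement handles this, and it must be treated separately rather than absorbed into the half-space argument.
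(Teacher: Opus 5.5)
Your proof is correct, but your main argument takes a different route from the paper. The paper derives the formula. It writes the Lagrangian $\frac12\|d-g_u\|_2^2-\mu\langle d,g_s\rangle$ and uses stationarity to get $d^\star=g_u+\mu g_s$. When $\langle g_u,g_s\rangle<0$, it asserts that the constraint is active and solves complementary slackness for $\mu$. This is essentially the KKT sketch you list as an alternative.

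You instead take the formula as a candidate and verify it with the obtuse-angle characterization of projection onto a closed convex set. The check reduces to the one-line computation $\langle g_u-p,d-p\rangle=c\langle g_s,d\rangle\le 0$. Your verification is shorter and self-contained. It needs no multiplier and no constraint qualification. It also needs no separate argument that the constraint is active; the paper asserts this without justification, and it holds because $\mu=0$ would return the infeasible point $g_u$. Your route also shows the geometry directly: $g_u-p=c\,g_s$ with $c<0$ is an outward normal to the half-space at the boundary point $p$. The paper's derivation, in turn, explains where the formula comes from rather than presupposing it.

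One small slip in your alternative concerns Slater's condition. For a convex program, the KKT conditions are sufficient with no qualification at all. Slater's condition, or here simply the affinity of the constraint, is what makes them necessary. Necessity is the direction a derivation like the paper's actually relies on.
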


\begin{proof}
If \(g_s = 0\), the constraint is vacuous and the claim is immediate. Assume \(g_s \neq 0\). Consider
\[
\min_{d \in \mathbb{R}^D} \frac{1}{2}\|d-g_u\|_2^2
\quad
\text{s.t.}
\quad
\langle d, g_s\rangle \ge 0.
\]
The objective is strongly convex and the feasible set is closed and convex, hence the minimizer is unique. The Lagrangian is
\[
\mathcal{L}(d,\mu)
=
\frac{1}{2}\|d-g_u\|_2^2 - \mu \langle d, g_s\rangle,
\qquad \mu \ge 0.
\]
Stationarity gives
\[
\nabla_d \mathcal{L}(d,\mu)=0
\quad\Longrightarrow\quad
d^\star = g_u + \mu g_s.
\]

If \(\langle g_u, g_s\rangle \ge 0\), then \(g_u\) is already feasible, so \(\mu=0\) and \(d^\star=g_u\).

If \(\langle g_u, g_s\rangle < 0\), the constraint must be active at optimum. By complementary slackness,
\[
\langle d^\star, g_s\rangle = 0,
\]
which implies
\[
\langle g_u, g_s\rangle + \mu \|g_s\|_2^2 = 0
\quad\Longrightarrow\quad
\mu = -\frac{\langle g_u, g_s\rangle}{\|g_s\|_2^2} > 0.
\]
Substituting this value into \(d^\star = g_u + \mu g_s\) yields the claimed form.
\end{proof}

\begin{proposition}[Alignment identity and minimal intervention]
\label{prop:alignment}
Let \(\tilde g_u = \Pi_{\mathcal{H}_{\mathrm{safe}}(g_s)}(g_u)\). Then
\[
\langle g_s, \tilde g_u\rangle
=
\max\!\bigl(0,\langle g_s, g_u\rangle\bigr)
\ge 0.
\]
Moreover, if \(\langle g_s, g_u\rangle < 0\) and \(g_s \neq 0\), then
\[
g_u - \tilde g_u
=
\frac{\langle g_u,g_s\rangle}{\|g_s\|_2^2} g_s,
\qquad
\|g_u - \tilde g_u\|_2
=
\frac{|\langle g_u,g_s\rangle|}{\|g_s\|_2},
\]
and
\[
\|\tilde g_u\|_2^2
=
\|g_u\|_2^2
-
\frac{\langle g_u,g_s\rangle^2}{\|g_s\|_2^2}.
\]
\end{proposition}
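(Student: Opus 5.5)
The plan is to use the closed form from Proposition~\ref{prop:halfspace} and check each claim by cases on the sign of \(\langle g_s, g_u\rangle\) and on whether \(g_s=0\).

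\textbf{Feasible case.} Suppose \(\langle g_s,g_u\rangle\ge 0\), or \(g_s=0\). Proposition~\ref{prop:halfspace} gives \(\tilde g_u=g_u\). Then \(\langle g_s,\tilde g_u\rangle=\langle g_s,g_u\rangle\). This equals \(\max(0,\langle g_s,g_u\rangle)\): in the first subcase the inner product is nonnegative, and when \(g_s=0\) both sides are zero.

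\textbf{Conflict case.} Suppose \(\langle g_s,g_u\rangle<0\) and \(g_s\neq 0\). Write \(c:=\langle g_u,g_s\rangle/\|g_s\|_2^2\), so that \(\tilde g_u=g_u-c\,g_s\). By linearity of the inner product,
\[
\langle g_s,\tilde g_u\rangle=\langle g_s,g_u\rangle-c\|g_s\|_2^2=0=\max\bigl(0,\langle g_s,g_u\rangle\bigr).
\]
This settles the alignment identity in all cases. The same formula gives \(g_u-\tilde g_u=c\,g_s\) directly. Taking norms yields \(\|g_u-\tilde g_u\|_2=|c|\,\|g_s\|_2=|\langle g_u,g_s\rangle|/\|g_s\|_2\).

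\textbf{Norm identity.} For the last claim I will use orthogonality rather than expanding blindly. We have the decomposition \(g_u=\tilde g_u+c\,g_s\), and we just showed \(\langle \tilde g_u,g_s\rangle=0\). The Pythagorean theorem therefore gives
\[
\|g_u\|_2^2=\|\tilde g_u\|_2^2+c^2\|g_s\|_2^2=\|\tilde g_u\|_2^2+\frac{\langle g_u,g_s\rangle^2}{\|g_s\|_2^2}.
\]
Rearranging gives the stated formula. Alternatively, one can expand \(\|g_u-c\,g_s\|_2^2\) directly, and the cross term gives the same result.

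\textbf{Main obstacle.} There is no real obstacle. The only care needed is bookkeeping of the degenerate case \(g_s=0\), where the constraint is vacuous and the \(\max\) formula still holds trivially. One should also remember that the "moreover" statements are asserted only in the conflict case, where the division by \(\|g_s\|_2^2\) is legitimate. The "minimal intervention" interpretation needs no extra argument: it follows from \(\tilde g_u\) being the unique Euclidean projection, as already established in Proposition~\ref{prop:halfspace}.
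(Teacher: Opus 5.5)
Your proposal is correct and follows the paper's proof essentially step for step: both use the closed form from Proposition~\ref{prop:halfspace}, split on $g_s=0$ and on the sign of $\langle g_s,g_u\rangle$, and check each identity by direct substitution. The only difference is cosmetic: you get the norm identity from the Pythagorean theorem using the orthogonality $\langle \tilde g_u, g_s\rangle = 0$ you had just shown, whereas the paper expands $\|g_u - c\,g_s\|_2^2$ directly (with your $c$), and both are one-line computations.
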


\begin{proof}
If \(g_s=0\), then \(\langle g_s,\tilde g_u\rangle=0\), so the statement is trivial. Assume \(g_s \neq 0\).

If \(\langle g_s, g_u\rangle \ge 0\), then Proposition~\ref{prop:halfspace} gives \(\tilde g_u = g_u\), hence
\[
\langle g_s,\tilde g_u\rangle
=
\langle g_s,g_u\rangle
=
\max(0,\langle g_s,g_u\rangle).
\]

If \(\langle g_s, g_u\rangle < 0\), Proposition~\ref{prop:halfspace} gives
\[
\tilde g_u
=
g_u - \frac{\langle g_u,g_s\rangle}{\|g_s\|_2^2} g_s.
\]
Taking the inner product with \(g_s\),
\[
\langle g_s,\tilde g_u\rangle
=
\langle g_s,g_u\rangle
-
\frac{\langle g_u,g_s\rangle}{\|g_s\|_2^2}\langle g_s,g_s\rangle
=
0
=
\max(0,\langle g_s,g_u\rangle).
\]
The remaining identities follow by direct substitution:
\[
g_u-\tilde g_u
=
\frac{\langle g_u,g_s\rangle}{\|g_s\|_2^2} g_s,
\]
and therefore
\[
\|g_u-\tilde g_u\|_2
=
\frac{|\langle g_u,g_s\rangle|}{\|g_s\|_2}.
\]
Finally,
\[
\|\tilde g_u\|_2^2
=
\left\|g_u - \frac{\langle g_u,g_s\rangle}{\|g_s\|_2^2}g_s\right\|_2^2
=
\|g_u\|_2^2
-
2\frac{\langle g_u,g_s\rangle^2}{\|g_s\|_2^2}
+
\frac{\langle g_u,g_s\rangle^2}{\|g_s\|_2^2}
=
\|g_u\|_2^2
-
\frac{\langle g_u,g_s\rangle^2}{\|g_s\|_2^2}.
\]
\end{proof}

\begin{remark}[Applying \(\lambda_u\) before or after rectification]
\label{rem:homogeneous}
For any \(\lambda_u \ge 0\),
\[
\Pi_{\mathcal{H}_{\mathrm{safe}}(g_s)}(\lambda_u g_u)
=
\lambda_u \Pi_{\mathcal{H}_{\mathrm{safe}}(g_s)}(g_u).
\]
Hence applying the unsupervised weight before rectification or multiplying the rectified auxiliary update afterward is equivalent for VLR.
\end{remark}

\subsection{First-order non-interference and one-step supervised descent}

\begin{proposition}[Scoped first-order non-interference and one-step descent on the surgery block]
\label{prop:safety_descent}
Assume that, with coordinates outside the selected surgery scope \(\mathcal{P}\) held fixed, the supervised loss is \(L\)-smooth as a function of the flattened scope coordinates \(z\in\mathbb{R}^D\). At iteration \(t\), let
\[
g_s^{(t)} = \nabla_z \mathcal{L}_s(z_t;\theta_{t,\setminus\mathcal{P}}),
\qquad
v_t = g_s^{(t)} + \lambda_u \tilde g_u^{(t)},
\qquad
z_{t+1} = z_t - \eta v_t,
\]
where \(\tilde g_u^{(t)}\) is the VLR-rectified raw auxiliary gradient on the same scope and \(\lambda_u \ge 0\). Then
\[
\langle g_s^{(t)}, v_t\rangle
\ge
\|g_s^{(t)}\|_2^2.
\]
Furthermore, if
\[
0 < \eta \le \frac{\langle g_s^{(t)}, v_t\rangle}{L\|v_t\|_2^2},
\]
then
\[
\mathcal{L}_s(z_{t+1};\theta_{t,\setminus\mathcal{P}})
\le
\mathcal{L}_s(z_t;\theta_{t,\setminus\mathcal{P}})
-
\frac{\eta}{2}\langle g_s^{(t)}, v_t\rangle.
\]
In particular, the decrease is strict whenever \(g_s^{(t)} \neq 0\).
\end{proposition}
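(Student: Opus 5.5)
The argument has two parts: an inner-product lower bound that comes directly from Proposition~\ref{prop:alignment}, and the standard descent lemma for $L$-smooth functions.

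\textbf{Inner-product bound.} Expanding $v_t$ gives
\[
\langle g_s^{(t)}, v_t\rangle = \|g_s^{(t)}\|_2^2 + \lambda_u\langle g_s^{(t)},\tilde g_u^{(t)}\rangle .
\]
By Proposition~\ref{prop:alignment}, $\langle g_s^{(t)},\tilde g_u^{(t)}\rangle=\max(0,\langle g_s^{(t)},g_u^{(t)}\rangle)\ge 0$. Since $\lambda_u\ge0$, this yields $\langle g_s^{(t)}, v_t\rangle\ge\|g_s^{(t)}\|_2^2$. The case $g_s^{(t)}=0$ is covered automatically, because then both sides vanish.

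\textbf{Descent step.} Write $f(z):=\mathcal{L}_s(z;\theta_{t,\setminus\mathcal{P}})$. This function is $L$-smooth on the block, so the quadratic upper bound gives
\[
f(z_t-\eta v_t)\le f(z_t)-\eta\langle g_s^{(t)},v_t\rangle+\tfrac{L\eta^2}{2}\|v_t\|_2^2 .
\]
The step-size condition $\eta\le \langle g_s^{(t)},v_t\rangle/(L\|v_t\|_2^2)$ implies $L\eta\|v_t\|_2^2\le\langle g_s^{(t)},v_t\rangle$. Multiplying by $\eta/2$ shows the quadratic term is at most $\tfrac{\eta}{2}\langle g_s^{(t)},v_t\rangle$, which gives the claimed decrease
\[
f(z_{t+1})\le f(z_t)-\tfrac{\eta}{2}\langle g_s^{(t)},v_t\rangle .
\]

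\textbf{Well-posedness of the step-size range.} When $g_s^{(t)}\neq0$, the first bound gives $\langle g_s^{(t)},v_t\rangle\ge\|g_s^{(t)}\|_2^2>0$, which forces $v_t\neq0$. The upper bound on $\eta$ is then a positive finite number, so the interval of admissible $\eta$ is nonempty. Strictness also follows: the decrease is at least $\tfrac{\eta}{2}\|g_s^{(t)}\|_2^2>0$.

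When $g_s^{(t)}=0$, the bound on $\eta$ degenerates. Either $v_t=0$, so the ratio is $0/0$, or $v_t\neq 0$, so the ratio is $0$ and no $\eta>0$ qualifies. In this case the step-size hypothesis is vacuous or ill-defined, so I would state explicitly that the descent claim is asserted only for $g_s^{(t)}\neq0$, i.e.\ exactly the regime of the ``in particular'' clause. Handling this degenerate case is the only delicate point; the rest is routine.

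Finally, I would note that the bound applies to the block-restricted function $f$ with the out-of-scope coordinates frozen. It does not apply to the full parameter update, which matches the scoped wording of the statement.
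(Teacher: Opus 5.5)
Your proof is correct and follows the paper's argument essentially verbatim. You get the lower bound $\langle g_s^{(t)},v_t\rangle\ge\|g_s^{(t)}\|_2^2$ from the alignment identity of Proposition~\ref{prop:alignment} together with $\lambda_u\ge 0$, and then use the $L$-smooth descent lemma, with the step-size condition absorbing the quadratic term; your added remark on the degenerate case $g_s^{(t)}=0$ (empty admissible step-size range, or a $0/0$ ratio) is a sensible clarification that the paper leaves implicit.
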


\begin{proof}
By Proposition~\ref{prop:alignment},
\[
\langle g_s^{(t)}, \tilde g_u^{(t)}\rangle \ge 0.
\]
Since \(\lambda_u \ge 0\),
\[
\langle g_s^{(t)}, \lambda_u \tilde g_u^{(t)}\rangle \ge 0.
\]
Therefore,
\[
\langle g_s^{(t)}, v_t\rangle
=
\langle g_s^{(t)}, g_s^{(t)}\rangle
+
\langle g_s^{(t)}, \lambda_u \tilde g_u^{(t)}\rangle
\ge
\|g_s^{(t)}\|_2^2.
\]

Since \(\mathcal{L}_s(\cdot;\theta_{t,\setminus\mathcal{P}})\) is \(L\)-smooth,
\[
\mathcal{L}_s(z_{t+1};\theta_{t,\setminus\mathcal{P}})
\le
\mathcal{L}_s(z_t;\theta_{t,\setminus\mathcal{P}})
+
\langle \nabla_z \mathcal{L}_s(z_t;\theta_{t,\setminus\mathcal{P}}), z_{t+1}-z_t\rangle
+
\frac{L}{2}\|z_{t+1}-z_t\|_2^2.
\]
Substituting \(\nabla_z \mathcal{L}_s(z_t;\theta_{t,\setminus\mathcal{P}})=g_s^{(t)}\) and \(z_{t+1}-z_t = -\eta v_t\), we obtain
\[
\mathcal{L}_s(z_{t+1};\theta_{t,\setminus\mathcal{P}})
\le
\mathcal{L}_s(z_t;\theta_{t,\setminus\mathcal{P}})
-
\eta \langle g_s^{(t)}, v_t\rangle
+
\frac{L\eta^2}{2}\|v_t\|_2^2.
\]
If
\[
\eta \le \frac{\langle g_s^{(t)}, v_t\rangle}{L\|v_t\|_2^2},
\]
then
\[
\frac{L\eta^2}{2}\|v_t\|_2^2
\le
\frac{\eta}{2}\langle g_s^{(t)}, v_t\rangle,
\]
and thus
\[
\mathcal{L}_s(z_{t+1};\theta_{t,\setminus\mathcal{P}})
\le
\mathcal{L}_s(z_t;\theta_{t,\setminus\mathcal{P}})
-
\frac{\eta}{2}\langle g_s^{(t)}, v_t\rangle.
\]
If \(g_s^{(t)}\neq 0\), the first part implies \(\langle g_s^{(t)},v_t\rangle \ge \|g_s^{(t)}\|_2^2 > 0\), so the decrease is strict.
\end{proof}

\begin{corollary}[Zero post-rectification conflict regret]
\label{cor:zero_regret}
Define the weighted applied auxiliary update \(g_{\mathrm{app}}^{(t)}:=\lambda_u \tilde g_u^{(t)}\) and the post-rectification \emph{applied} conflict regret
\[
\mathcal{R}^{\mathrm{app}}_T
=
\sum_{t=1}^{T}
\max\!\Bigl(0, -\langle g_s^{(t)}, g_{\mathrm{app}}^{(t)}\rangle \Bigr).
\]
Then \(\mathcal{R}^{\mathrm{app}}_T = 0\) for all \(T\).
\end{corollary}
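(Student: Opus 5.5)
The plan is to show that every summand of \(\mathcal{R}^{\mathrm{app}}_T\) vanishes individually; the sum is then zero for every \(T\) with no cross-iteration argument. Fix an iteration \(t\) and write \(g_s = g_s^{(t)}\), \(g_u = g_u^{(t)}\), and \(\tilde g_u^{(t)} = \Pi_{\mathcal{H}_{\mathrm{safe}}(g_s)}(g_u)\) for the VLR output, as in Proposition~\ref{prop:halfspace}. By bilinearity of the inner product,
\[
\langle g_s^{(t)}, g_{\mathrm{app}}^{(t)}\rangle = \lambda_u \,\langle g_s^{(t)}, \tilde g_u^{(t)}\rangle .
\]

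Next I invoke the alignment identity of Proposition~\ref{prop:alignment}. It gives
\[
\langle g_s^{(t)}, \tilde g_u^{(t)}\rangle = \max\bigl(0, \langle g_s^{(t)}, g_u^{(t)}\rangle\bigr) \ge 0 .
\]
This holds for any pair of vectors, including the degenerate case \(g_s^{(t)}=0\), which that proposition already covers. Because \(\lambda_u \ge 0\), as assumed throughout and in Proposition~\ref{prop:safety_descent}, the applied inner product \(\langle g_s^{(t)}, g_{\mathrm{app}}^{(t)}\rangle\) is nonnegative. Hence \(-\langle g_s^{(t)}, g_{\mathrm{app}}^{(t)}\rangle \le 0\), so its positive part is \(0\).

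Summing over \(t = 1,\dots,T\) then gives \(\mathcal{R}^{\mathrm{app}}_T = 0\) for every \(T\). No step here is a real obstacle. The only point needing care is that \(\tilde g_u^{(t)}\) and \(g_s^{(t)}\) must be computed from the same supervised gradient on the same surgery scope, which is how Algorithm~\ref{alg:ggr} and the definition of \(g_{\mathrm{app}}^{(t)}\) set it up. If \(\lambda_u\) were allowed to be negative the sign argument would fail, so I would state the assumption \(\lambda_u \ge 0\) explicitly. Alternatively, Remark~\ref{rem:homogeneous} lets one view \(g_{\mathrm{app}}^{(t)}\) as the projection of \(\lambda_u g_u^{(t)}\) onto the half-space and apply Proposition~\ref{prop:alignment} to it directly.
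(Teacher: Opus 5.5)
Your proof is correct and is essentially the paper's argument: Proposition~\ref{prop:alignment} gives $\langle g_s^{(t)}, \tilde g_u^{(t)}\rangle \ge 0$ for each $t$, and $\lambda_u \ge 0$ then makes every summand vanish. Your explicit note that $\lambda_u \ge 0$ is needed is a reasonable clarification, since the corollary does not state it and instead takes it from the surrounding assumptions (e.g., Proposition~\ref{prop:safety_descent}).
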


\begin{proof}
By Proposition~\ref{prop:alignment}, \(\langle g_s^{(t)}, \tilde g_u^{(t)}\rangle \ge 0\) for every \(t\). Since \(\lambda_u \ge 0\), we have \(\langle g_s^{(t)}, g_{\mathrm{app}}^{(t)}\rangle \ge 0\). Hence each summand is zero.
\end{proof}

\begin{remark}[Raw conflict versus applied conflict]
The corollary above is intentionally defined on the auxiliary update actually executed by the optimizer. The raw pre-rectification gradient \(g_u^{(t)}\) may still conflict with \(g_s^{(t)}\), which is precisely the phenomenon captured by the raw conflict-rate diagnostic in Figure~\ref{fig:conflict_regret}.
\end{remark}

\subsection{Subspace-aware rectification}

\begin{proposition}[OSR: exact and approximate non-opposition]
\label{prop:osr}
Let \(U \in \mathbb{R}^{D\times k}\) have orthonormal columns and define
\[
\tilde g_u^{\mathrm{orth}} = (I-UU^\top) g_u.
\]
Then
\[
\langle g_s, \tilde g_u^{\mathrm{orth}} \rangle
=
\langle (I-UU^\top) g_s, g_u \rangle.
\]
Consequently, if \(g_s \in \mathrm{span}(U)\), then
\[
\langle g_s, \tilde g_u^{\mathrm{orth}} \rangle = 0.
\]
More generally,
\[
\bigl|
\langle g_s, \tilde g_u^{\mathrm{orth}} \rangle
\bigr|
\le
\mathrm{dist}\!\bigl(g_s,\mathrm{span}(U)\bigr)\,\|g_u\|_2.
\]
\end{proposition}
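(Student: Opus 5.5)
The whole argument rests on one fact: \(P_\perp := I-UU^\top\) is the orthogonal projector onto \(\mathrm{span}(U)^\perp\). Because \(U\) has orthonormal columns, \(U^\top U = I_k\). From this we get \(P_\perp^\top = P_\perp\), since \((UU^\top)^\top = UU^\top\), and \(P_\perp^2 = I - 2UU^\top + U(U^\top U)U^\top = P_\perp\). Also \(\mathrm{range}(UU^\top)=\mathrm{span}(U)\), and \(P_\perp\) annihilates \(\mathrm{span}(U)\). The proof follows the three claims of the statement in order.

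\textbf{Identity.} Using symmetry of \(P_\perp\), we move it across the inner product:
\[
\langle g_s, P_\perp g_u\rangle = \langle P_\perp^\top g_s, g_u\rangle = \langle P_\perp g_s, g_u\rangle .
\]
This is exactly the first claim.

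\textbf{Exact case.} Suppose \(g_s\in\mathrm{span}(U)\), so \(g_s = Uc\) for some coefficient vector \(c\). Then
\[
P_\perp g_s = Uc - U(U^\top U)c = 0 ,
\]
and the identity gives \(\langle g_s, \tilde g_u^{\mathrm{orth}}\rangle = 0\). This also matches the remark already made in Section~\ref{sec:subspace}.

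\textbf{Approximate case.} Apply Cauchy--Schwarz to the identity:
\[
\bigl|\langle P_\perp g_s, g_u\rangle\bigr| \le \|P_\perp g_s\|_2\,\|g_u\|_2 .
\]
It remains to show \(\|P_\perp g_s\|_2 = \mathrm{dist}(g_s,\mathrm{span}(U))\). Decompose \(g_s = UU^\top g_s + P_\perp g_s\). The first term lies in \(\mathrm{span}(U)\), and the second is orthogonal to it because \(U^\top P_\perp = U^\top - U^\top = 0\). For any \(w\in\mathrm{span}(U)\), Pythagoras then gives
\[
\|g_s - w\|_2^2 = \|UU^\top g_s - w\|_2^2 + \|P_\perp g_s\|_2^2 .
\]
This is minimized at \(w = UU^\top g_s\), with minimum value \(\|P_\perp g_s\|_2^2\), which proves the distance identification.

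The only step needing any care is this distance characterization, that is, checking the orthogonality \(U^\top P_\perp = 0\) so that Pythagoras applies. It is standard, and nothing else is a real obstacle. The exact case also follows from the general bound, since the distance is zero when \(g_s\in\mathrm{span}(U)\), but the direct computation is cleaner.
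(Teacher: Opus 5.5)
Your proposal is correct and follows the paper's proof essentially step for step. You use the symmetry of \(I-UU^\top\) to move it across the inner product, the fact that it annihilates \(\mathrm{span}(U)\) for the exact case, and Cauchy--Schwarz with \(\|(I-UU^\top)g_s\|_2=\mathrm{dist}(g_s,\mathrm{span}(U))\) for the bound; the only difference is that you verify this distance identity via the Pythagorean decomposition, which the paper simply asserts.
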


\begin{proof}
Since \(I-UU^\top\) is symmetric,
\[
\langle g_s, \tilde g_u^{\mathrm{orth}}\rangle
=
\langle g_s, (I-UU^\top)g_u\rangle
=
\langle (I-UU^\top)g_s, g_u\rangle.
\]
If \(g_s \in \mathrm{span}(U)\), then \((I-UU^\top)g_s = 0\), proving the exact non-opposition statement.

For the approximate bound, note that
\[
\|(I-UU^\top)g_s\|_2
=
\mathrm{dist}(g_s,\mathrm{span}(U)).
\]
Applying Cauchy--Schwarz yields
\[
\bigl|
\langle g_s, \tilde g_u^{\mathrm{orth}} \rangle
\bigr|
=
\bigl|
\langle (I-UU^\top)g_s, g_u\rangle
\bigr|
\le
\|(I-UU^\top)g_s\|_2\,\|g_u\|_2.
\]
\end{proof}

\begin{proposition}[CSR as a cone projection and its non-opposition condition]
\label{prop:csr}
Let \(U \in \mathbb{R}^{D\times k}\) have orthonormal columns, and define
\[
C(U) = \{d \in \mathbb{R}^D : U^\top d \succeq 0\}.
\]
Then
\[
\tilde g_u^{\mathrm{csr}}
=
g_u - U\min(U^\top g_u, 0)
\]
is exactly the Euclidean projection of \(g_u\) onto \(C(U)\).

Moreover, writing
\[
g_s = Uc + r,
\qquad
r \perp \mathrm{span}(U),
\]
we have
\[
\langle g_s, \tilde g_u^{\mathrm{csr}} \rangle
=
c^\top \max(U^\top g_u, 0)
+
\langle r, (I-UU^\top)g_u\rangle.
\]
In particular, if \(g_s \in \mathrm{cone}(U)\), i.e., \(r=0\) and \(c \succeq 0\), then
\[
\langle g_s, \tilde g_u^{\mathrm{csr}} \rangle \ge 0.
\]
\end{proposition}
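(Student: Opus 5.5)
The plan is to reduce everything to coordinates adapted to the orthogonal decomposition $\mathbb{R}^D=\mathrm{span}(U)\oplus\mathrm{span}(U)^\perp$. Write $g_u = Ua + w$ with $a = U^\top g_u\in\mathbb{R}^k$ and $w=(I-UU^\top)g_u$. Any $d$ decomposes the same way as $d = Ub + y$ with $b=U^\top d$ and $y\perp\mathrm{span}(U)$. Because $U$ has orthonormal columns, Pythagoras gives
\[
\|d-g_u\|_2^2=\|b-a\|_2^2+\|y-w\|_2^2 .
\]
The constraint $d\in C(U)$ reads simply $b\succeq 0$ and leaves $y$ free. Hence the projection problem separates. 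The complement part is minimized at $y=w$. The subspace part is the projection of $a$ onto the nonnegative orthant, which separates coordinatewise into $b_i=\max(a_i,0)$. Strict convexity of the objective together with closedness and convexity of $C(U)$ gives uniqueness, exactly as in Proposition~\ref{prop:halfspace}. Reassembling yields $d^\star = U\max(a,0)+w$. Using $a = \max(a,0)+\min(a,0)$ and $w=g_u-Ua$, this equals $g_u - U\min(U^\top g_u,0)$, which is the claimed formula.

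For the inner-product identity, substitute $g_s=Uc+r$ with $r\perp\mathrm{span}(U)$ and $\tilde g_u^{\mathrm{csr}} = U\max(a,0)+w$, then expand bilinearly. The cross terms vanish: $\langle Uc, w\rangle=0$ because $w\perp\mathrm{span}(U)$, and $\langle r, U\max(a,0)\rangle=0$ because $r\perp\mathrm{span}(U)$. The remaining terms are $\langle Uc,U\max(a,0)\rangle=c^\top U^\top U\max(a,0)=c^\top\max(U^\top g_u,0)$ and $\langle r,w\rangle=\langle r,(I-UU^\top)g_u\rangle$. This is exactly the stated expression.

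For the final claim, $g_s\in\mathrm{cone}(U)$ means $r=0$ and $c\succeq 0$. The second term then vanishes, and the first is a sum of products of nonnegative numbers, $c_i\max(a_i,0)\ge 0$, so the inner product is nonnegative. An alternative check is that $g_s\in\mathrm{cone}(U)$ lies in the dual cone of $C(U)$.

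The only step needing care is the separability argument in the first part, specifically the justification that the orthant projection is coordinatewise clipping. I would state it as a one-line minimization of $\sum_i (b_i-a_i)^2$ subject to $b_i\ge 0$, which splits over $i$. Orthonormality of $U$ is used crucially there, both for the Pythagorean split and for $U^\top U=I$.
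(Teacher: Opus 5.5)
Your proof is correct and follows essentially the same route as the paper's: decompose along $\mathrm{span}(U)$ and its orthogonal complement, then split the objective by Pythagoras into an orthant projection in the $U$-coordinates (solved by coordinatewise clipping) plus a free complement term. The paper then obtains the inner-product identity by the same bilinear expansion of $\tilde g_u^{\mathrm{csr}} = U\max(U^\top g_u,0) + (I-UU^\top)g_u$; your explicit uniqueness remark and the dual-cone check ($\langle Uc, d\rangle = c^\top U^\top d \ge 0$ for $c\succeq 0$, $d\in C(U)$) are valid small additions.
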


\begin{proof}
Let
\[
a = U^\top g_u,
\qquad
b = (I-UU^\top)g_u,
\]
so that \(g_u = Ua + b\) with \(b \perp \mathrm{span}(U)\). Any vector \(d\in\mathbb{R}^D\) can be written as
\[
d = Uz + b + q,
\]
where \(z\in\mathbb{R}^k\) and \(q \perp \mathrm{span}(U)\). Then
\[
U^\top d = z,
\]
so the constraint \(U^\top d \succeq 0\) is equivalent to \(z \succeq 0\). The objective becomes
\[
\frac{1}{2}\|d-g_u\|_2^2
=
\frac{1}{2}\|U(z-a)+q\|_2^2
=
\frac{1}{2}\|z-a\|_2^2 + \frac{1}{2}\|q\|_2^2,
\]
where we used the orthonormality of \(U\). Hence the optimum is attained at
\[
q^\star = 0,
\qquad
z^\star = \max(a,0),
\]
which gives
\[
d^\star
=
U\max(a,0)+b
=
g_u - U\min(U^\top g_u,0).
\]

For the non-opposition identity, note that
\[
\tilde g_u^{\mathrm{csr}}
=
U\max(U^\top g_u,0) + (I-UU^\top)g_u.
\]
Therefore,
\[
\begin{aligned}
\langle g_s, \tilde g_u^{\mathrm{csr}} \rangle
&= \langle Uc+r,\; U\max(U^\top g_u,0) + (I-UU^\top)g_u \rangle \\
&= c^\top \max(U^\top g_u,0) + \langle r,(I-UU^\top)g_u\rangle.
\end{aligned}
\]
If \(r=0\) and \(c\succeq 0\), the first term is nonnegative and the second term vanishes, proving the claim.
\end{proof}

\subsection{Stylized mixture model}

\begin{proposition}[Expected progress under the stylized mixture model]
\label{prop:mixture}
Let \(g_s^\star \neq 0\) be the population supervised gradient, and let
\[
\hat g_s^\star = \frac{g_s^\star}{\|g_s^\star\|_2}.
\]
Assume the auxiliary gradient admits the decomposition
\[
g_u = \beta g_s^\star + \epsilon_u,
\]
where \(\beta\) is a scalar alignment coefficient and \(\epsilon_u\) satisfies
\[
\mathbb{E}[\epsilon_u]=0,
\qquad
\langle \epsilon_u, g_s^\star \rangle = 0
\quad \text{a.s.}
\]
Consider the following three update rules:
\[
v_{\mathrm{filter}} = g_s^\star,
\qquad
v_{\mathrm{naive}} = g_s^\star + \lambda_u g_u,
\qquad
v_{\mathrm{ggr}} = g_s^\star + \lambda_u \Pi_{\mathcal{H}_{\mathrm{safe}}(g_s^\star)}(g_u).
\]
Then
\[
\mathbb{E}\!\left[\langle v_{\mathrm{filter}}, \hat g_s^\star\rangle\right]
=
\|g_s^\star\|_2,
\]
\[
\mathbb{E}\!\left[\langle v_{\mathrm{naive}}, \hat g_s^\star\rangle\right]
=
\bigl(1+\lambda_u \mathbb{E}[\beta]\bigr)\|g_s^\star\|_2,
\]
and
\[
\mathbb{E}\!\left[\langle v_{\mathrm{ggr}}, \hat g_s^\star\rangle\right]
=
\bigl(1+\lambda_u \mathbb{E}[\max(0,\beta)]\bigr)\|g_s^\star\|_2.
\]
Consequently, GGR never underperforms filtering in expectation, and it is strictly better whenever \(\lambda_u>0\) and \(\mathbb{P}(\beta>0)>0\). In contrast, naive utilization underperforms filtering whenever \(\mathbb{E}[\beta]<0\).
\end{proposition}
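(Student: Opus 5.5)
The plan is to compute each of the three inner products pointwise and then take expectations, using linearity of expectation together with the two structural assumptions on \(\epsilon_u\). Filtering is immediate: \(\langle g_s^\star,\hat g_s^\star\rangle=\|g_s^\star\|_2\) is deterministic. For naive utilization, expand
\[
\langle g_s^\star+\lambda_u(\beta g_s^\star+\epsilon_u),\hat g_s^\star\rangle
=\|g_s^\star\|_2+\lambda_u\beta\|g_s^\star\|_2+\lambda_u\langle\epsilon_u,\hat g_s^\star\rangle .
\]
The last term vanishes almost surely by the orthogonality assumption. Taking expectations then gives \((1+\lambda_u\mathbb{E}[\beta])\|g_s^\star\|_2\), provided \(\beta\) is integrable, which I will state as an implicit assumption.

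For GGR, the key step is to evaluate the projection explicitly with Proposition~\ref{prop:halfspace}, using anchor \(g_s^\star\neq0\). Since \(\langle g_u,g_s^\star\rangle=\beta\|g_s^\star\|_2^2\) almost surely, the conflict condition holds exactly when \(\beta<0\). I then split into two cases.
\begin{itemize}
\item If \(\beta\ge0\), the projection returns \(g_u\) unchanged.
\item If \(\beta<0\), the projection subtracts \(\beta g_s^\star\) and returns \(\epsilon_u\).
\end{itemize}
Both cases combine into the single formula \(\Pi_{\mathcal{H}_{\mathrm{safe}}(g_s^\star)}(g_u)=\max(0,\beta)g_s^\star+\epsilon_u\). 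Alternatively, Proposition~\ref{prop:alignment} gives the inner product with \(g_s^\star\) directly as \(\max(0,\beta)\|g_s^\star\|_2^2\); dividing by \(\|g_s^\star\|_2\) yields the same thing. Taking expectations then gives \((1+\lambda_u\mathbb{E}[\max(0,\beta)])\|g_s^\star\|_2\).

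The comparisons follow from these formulas.
\begin{itemize}
\item Because \(\max(0,\beta)\ge0\) and \(\lambda_u\ge0\), GGR is never worse than filtering.
\item For strictness, note that \(\max(0,\beta)\) is a nonnegative random variable that is positive on \(\{\beta>0\}\). That event has positive probability, so \(\mathbb{E}[\max(0,\beta)]>0\), by the standard fact that a nonnegative variable with zero mean is zero almost surely. With \(\lambda_u>0\), the GGR gain over filtering is therefore strictly positive.
\item Naive utilization falls below filtering exactly when \(\lambda_u\mathbb{E}[\beta]<0\), so the claim needs \(\lambda_u>0\) in addition to \(\mathbb{E}[\beta]<0\); I will note this condition explicitly.
\end{itemize}

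The only real subtlety is measure-theoretic bookkeeping. The case split must be on the random event \(\{\beta<0\}\), and both \(\mathbb{E}[\beta]\) and \(\mathbb{E}[\max(0,\beta)]\) must be finite. The zero-mean assumption on \(\epsilon_u\) is not actually needed once orthogonality holds almost surely, so I will remark that only orthogonality is used.
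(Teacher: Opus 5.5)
Your proposal is correct and follows the paper's proof essentially step for step. Both arguments expand each inner product pointwise using almost-sure orthogonality, split on the sign of $\beta$ to show the projection returns $g_u$ or $\epsilon_u$, and then apply linearity of expectation. Your added bookkeeping is accurate and supplies details the paper leaves implicit: integrability of $\beta$, $\lambda_u>0$ for the naive-utilization comparison and $\lambda_u\ge 0$ for the GGR comparison, the explicit strictness argument, and the observation that $\mathbb{E}[\epsilon_u]=0$ is never used.
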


\begin{proof}
The filtering case is immediate:
\[
\langle v_{\mathrm{filter}}, \hat g_s^\star\rangle
=
\langle g_s^\star, \hat g_s^\star\rangle
=
\|g_s^\star\|_2.
\]

For naive utilization,
\[
\langle v_{\mathrm{naive}}, \hat g_s^\star\rangle
=
\langle g_s^\star + \lambda_u(\beta g_s^\star + \epsilon_u), \hat g_s^\star\rangle
=
(1+\lambda_u \beta)\|g_s^\star\|_2,
\]
since \(\langle \epsilon_u,\hat g_s^\star\rangle = 0\). Taking expectation gives the second identity.

For GGR, observe that
\[
\langle g_u, g_s^\star\rangle
=
\beta \|g_s^\star\|_2^2,
\]
because \(\epsilon_u\) is orthogonal to \(g_s^\star\). If \(\beta \ge 0\), then \(g_u\) is already feasible, so the projection leaves it unchanged. If \(\beta < 0\), the projection removes exactly the negative component along \(g_s^\star\), yielding
\[
\Pi_{\mathcal{H}_{\mathrm{safe}}(g_s^\star)}(g_u) = \epsilon_u.
\]
Hence
\[
\left\langle
\Pi_{\mathcal{H}_{\mathrm{safe}}(g_s^\star)}(g_u),
\hat g_s^\star
\right\rangle
=
\max(0,\beta)\|g_s^\star\|_2,
\]
and therefore
\[
\langle v_{\mathrm{ggr}}, \hat g_s^\star\rangle
=
\bigl(1+\lambda_u \max(0,\beta)\bigr)\|g_s^\star\|_2.
\]
Taking expectation yields the result.
\end{proof}

\subsection{Relation to symmetric gradient surgery}
\label{app:pcgrad}

GGR should be interpreted as an \emph{asymmetric anchor-preserving projection} rather than a symmetric task-balancing rule. In OSSL, the supervised gradient is label-verified and therefore acts as the protected anchor, whereas the auxiliary gradient can be corrupted by pseudo-label mistakes or open-set objectives. This asymmetry is exactly where GGR differs from standard PCGrad \cite{pcgrad}.

\begin{proposition}[Asymmetric anchoring versus symmetric PCGrad]
\label{prop:pcgrad}
Let \(g_s \neq 0\), \(g_u \neq 0\), and \(c=\langle g_s,g_u\rangle<0\). Let
\[
v_{\mathrm{ggr}}
=
g_s + \left(g_u - \frac{c}{\|g_s\|_2^2}g_s\right)
\]
be the VLR update, and let
\[
v_{\mathrm{pcgrad}}
=
\left(g_s - \frac{c}{\|g_u\|_2^2}g_u\right)
+
\left(g_u - \frac{c}{\|g_s\|_2^2}g_s\right)
\]
be the standard symmetric PCGrad aggregation for the pair \((g_s,g_u)\). Then
\[
\langle g_s, v_{\mathrm{ggr}}\rangle = \|g_s\|_2^2,
\]
whereas
\[
\langle g_s, v_{\mathrm{pcgrad}}\rangle
=
\|g_s\|_2^2 - \frac{c^2}{\|g_u\|_2^2}
=
\|g_s\|_2^2\bigl(1-\cos^2(g_s,g_u)\bigr).
\]
In particular, if \(g_s\) and \(g_u\) are nearly opposite, then the supervised progress under symmetric PCGrad approaches zero.
\end{proposition}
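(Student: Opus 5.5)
The plan is to compute both inner products directly from the definitions. The only identities needed are bilinearity of $\langle\cdot,\cdot\rangle$ and the definition $c=\langle g_s,g_u\rangle$.

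\textbf{VLR update.} Expanding $\langle g_s, v_{\mathrm{ggr}}\rangle$ gives
\[
\|g_s\|_2^2 + \langle g_s,g_u\rangle - \frac{c}{\|g_s\|_2^2}\|g_s\|_2^2 = \|g_s\|_2^2 + c - c = \|g_s\|_2^2 .
\]
Equivalently, since $c<0$ and $g_s\neq 0$, Proposition~\ref{prop:halfspace} identifies the parenthesized term as $\Pi_{\mathcal{H}_{\mathrm{safe}}(g_s)}(g_u)$. Proposition~\ref{prop:alignment} then gives $\langle g_s,\tilde g_u\rangle=\max(0,c)=0$, which yields the same conclusion.

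\textbf{PCGrad update.} I will split $v_{\mathrm{pcgrad}}$ into its two summands. The second summand is identical to the rectified auxiliary term above, so its inner product with $g_s$ vanishes. For the first summand,
\[
\Bigl\langle g_s,\, g_s - \tfrac{c}{\|g_u\|_2^2}g_u\Bigr\rangle = \|g_s\|_2^2 - \frac{c}{\|g_u\|_2^2}\,c = \|g_s\|_2^2 - \frac{c^2}{\|g_u\|_2^2}.
\]
Adding the two contributions gives the first claimed expression. For the second form, I will substitute $c^2 = \|g_s\|_2^2\|g_u\|_2^2\cos^2(g_s,g_u)$, using the standard definition of the cosine, which is well defined because both vectors are nonzero. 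Factoring out $\|g_s\|_2^2$ then gives $\|g_s\|_2^2\bigl(1-\cos^2(g_s,g_u)\bigr)$.

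\textbf{Closing remark.} If $g_s$ and $g_u$ are nearly opposite, then $\cos(g_s,g_u)\to -1$, so $\cos^2\to 1$ and the progress goes to $0$ with $\|g_s\|_2$ fixed. Because $\cos^2\le 1$ always, the PCGrad expression is nonnegative and never exceeds the GGR value.

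There is no genuine obstacle here. The only point needing care is noticing that PCGrad's modification of the anchor, the term $-\frac{c}{\|g_u\|_2^2}g_u$, contributes $-c^2/\|g_u\|_2^2$ rather than something that cancels. Keeping careful track of signs, with $c<0$ and the product $c\cdot c$ positive, suffices.
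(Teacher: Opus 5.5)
Your proposal is correct and takes essentially the same approach as the paper: expand both inner products directly by bilinearity (identifying the VLR term via Proposition~\ref{prop:halfspace}), then substitute $c=\|g_s\|_2\|g_u\|_2\cos(g_s,g_u)$. Your closing observation that the PCGrad value lies in $[0,\|g_s\|_2^2)$ is accurate; it holds strictly below $\|g_s\|_2^2$ because $c\neq 0$.
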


\begin{proof}
For VLR, Proposition~\ref{prop:halfspace} gives
\[
\tilde g_u = g_u - \frac{c}{\|g_s\|_2^2}g_s.
\]
Therefore,
\[
\langle g_s, v_{\mathrm{ggr}}\rangle
=
\langle g_s, g_s\rangle
+
\left\langle g_s, g_u - \frac{c}{\|g_s\|_2^2}g_s \right\rangle
=
\|g_s\|_2^2 + c - c
=
\|g_s\|_2^2.
\]

For symmetric PCGrad,
\[
\langle g_s, v_{\mathrm{pcgrad}}\rangle
=
\left\langle g_s, g_s - \frac{c}{\|g_u\|_2^2}g_u \right\rangle
+
\left\langle g_s, g_u - \frac{c}{\|g_s\|_2^2}g_s \right\rangle
=
\|g_s\|_2^2 - \frac{c^2}{\|g_u\|_2^2}.
\]
Using \(c=\langle g_s,g_u\rangle=\|g_s\|_2\|g_u\|_2\cos(g_s,g_u)\) yields
\[
\frac{c^2}{\|g_u\|_2^2}
=
\|g_s\|_2^2\cos^2(g_s,g_u),
\]
which proves the stated identity.
\end{proof}

If PCGrad is artificially constrained to rectify only the auxiliary gradient while keeping \(g_s\) fixed, it reduces exactly to the VLR update in Eq.~\eqref{eq:ggr_vector}. Therefore, the OSSL-specific design choice is not projection alone, but the asymmetric preservation of the label-verified supervised anchor.

We further compare GGR against generic plug-in optimization controls using the same base hyperparameters as their corresponding OSSL baselines. The goal is to test whether symmetric conflict handling or simple gradient suppression can replace the supervised-anchor design.

\begin{table}[t]
\centering
\footnotesize
\setlength{\tabcolsep}{3pt}
\caption{Control study comparing GGR with generic optimization controls. Each cell reports closed-set accuracy / open-set BA. PCGrad denotes the standard symmetric formulation, GradClip clips the auxiliary-gradient norm, and ConfDrop discards \(g_u\) when \(\langle g_s,g_u\rangle<0\).}
\label{tab:control_pcgrad}
\resizebox{\linewidth}{!}{
\begin{tabular}{lcccc}
\toprule
Method & C10-6-30 & C100-20-100 & C100-50-250 & ImageNet-1\% \\
\midrule
DAC & 89.96 / 74.13 & 55.00 / 49.82 & 56.82 / 54.75 & 86.50 / 80.42 \\
+ PCGrad & 89.28 / 74.00 & 56.40 / 50.15 & 55.82 / 53.89 & 86.25 / 75.97 \\
+ GradClip & 85.02 / 70.01 & 50.60 / 47.52 & 56.06 / 49.58 & 86.60 / 80.23 \\
+ ConfDrop & 90.42 / 74.86 & 53.70 / 50.35 & 55.08 / 53.12 & 85.85 / 76.98 \\
+ Ours & \textbf{90.85} / \textbf{76.03} & \textbf{58.20} / \textbf{54.31} & \textbf{58.58} / \textbf{56.60} & \textbf{87.20} / \textbf{82.71} \\
\midrule
IOMatch & 89.78 / 72.69 & 55.35 / 45.17 & 58.38 / 46.81 & 86.60 / 75.27 \\
+ PCGrad & 91.22 / 74.74 & 56.50 / 47.05 & 59.72 / 48.32 & 85.15 / 74.75 \\
+ GradClip & 92.82 / 73.75 & 52.25 / 41.71 & 56.36 / 44.30 & 85.00 / 74.97 \\
+ ConfDrop & 85.63 / 69.16 & 55.65 / 44.78 & 59.18 / 47.39 & 83.95 / 73.00 \\
+ Ours & \textbf{93.43} / \textbf{76.32} & \textbf{58.20} / \textbf{47.72} & \textbf{61.02} / \textbf{50.90} & \textbf{87.40} / \textbf{76.24} \\
\bottomrule
\end{tabular}
}
\end{table}

Table~\ref{tab:control_pcgrad} complements Proposition~\ref{prop:pcgrad} with direct empirical evidence. Symmetric PCGrad can occasionally help, but it is markedly less stable across DAC and IOMatch because it may also distort the supervised anchor. GradClip uniformly shrinks useful and harmful components together, while ConfDrop discards the entire auxiliary signal whenever a conflict is detected. GGR is consistently strongest because it removes only the destructive component of \(g_u\) while preserving both the supervised anchor and the non-conflicting auxiliary information.

\section{Implementation Details}
\label{app:implementation}

All methods are implemented and evaluated in the USB codebase to ensure identical data processing, augmentation pipelines, and evaluation protocols across baselines and GGR-enhanced variants. For CIFAR-10/100 \cite{cifar} we use WideResNet-28-2 \cite{wideresnet}, and for ImageNet-30 \cite{imagenet} we use ResNet-18 \cite{resnet}. Unless otherwise specified, ``GGR'' refers to the default vector-level rectifier (VLR) in Eq.~\eqref{eq:ggr_vector}. Following the main paper, gradient rectification is applied to the encoder backbone by default, while task-specific heads are updated with the original gradients. This design isolates representation-level conflict mitigation and preserves the method-specific behavior of the original OSSL heads.

When GGR is attached to a base OSSL method, all base-method-specific hyperparameters are kept identical to those of the corresponding baseline. The only intervention is the post-hoc rectification of the auxiliary gradient before the optimizer step. Checkpoints are selected according to the best closed-set validation accuracy, and the selected checkpoint is then used to report both closed-set accuracy and open-set BA. All reported results are averaged over three random seeds.

\section{Additional Diagnostics and Sensitivity Analyses}
\label{app:diagnostics}

\subsection{Conflict metrics}

To quantify destructive interference, we define the raw conflict indicator at iteration \(t\) as
\[
c_t^{\mathrm{raw}}
=
\mathbf{1}\!\left\{
\langle g_s^{(t)}, g_u^{(t)}\rangle < 0
\right\}.
\]
Its moving-average rate over a window of size \(W\) is
\[
\mathrm{CR}_t^{\mathrm{raw}}
=
\frac{1}{W}
\sum_{i=t-W+1}^{t}
c_i^{\mathrm{raw}}.
\]

For the actually applied auxiliary update \(g_{\mathrm{app}}^{(t)}\)---equal to \(\lambda_u g_u^{(t)}\) for the baseline and \(\lambda_u \tilde g_u^{(t)}\) for GGR---we additionally report
\[
c_t^{\mathrm{app}}
=
\mathbf{1}\!\left\{
\langle g_s^{(t)}, g_{\mathrm{app}}^{(t)}\rangle < 0
\right\},
\qquad
\mathrm{CR}_t^{\mathrm{app}}
=
\frac{1}{W}
\sum_{i=t-W+1}^{t}
c_i^{\mathrm{app}},
\]
and the cumulative applied conflict regret
\[
\mathcal{R}_T^{\mathrm{app}}
=
\sum_{t=1}^{T}
\max\!\Bigl(
0,\,
-\langle g_s^{(t)}, g_{\mathrm{app}}^{(t)}\rangle
\Bigr).
\]
For VLR, Corollary~\ref{cor:zero_regret} implies \(\mathcal{R}_T^{\mathrm{app}} = 0\) in exact arithmetic. In practice, tiny numerical deviations may appear under mixed-precision training or gradient accumulation, but the value remains negligible compared with the baseline.

For all diagnostics in Figure~\ref{fig:conflict_regret}, gradients are measured on the same surgery scope used by the optimizer update. The moving-average window is set to 1000 iterations.

\subsection{Computational complexity}

Let \(D\) denote the number of parameters inside the selected surgery scope and let \(d\) denote the subspace dimension. Once the supervised and auxiliary gradients are available, the VLR projection step requires one inner product and one vector subtraction, leading to \(O(D)\) time and \(O(1)\) extra memory. For a fixed basis \(U \in \mathbb{R}^{D\times d}\), the OSR and CSR projection steps each require two matrix-vector multiplications, leading to \(O(Dd)\) projection time and \(O(Dd)\) memory to store the basis. These bounds describe only the algebraic rectification cost. The full training overhead also includes computing the supervised and auxiliary gradients separately, assembling the rectified update for the optimizer step, and, for subspace variants, maintaining and re-orthonormalizing the running supervised subspace. This explains why VLR is adopted as the default choice in the main tables, whereas subspace variants trade extra computation for a more stable anchor under noisy supervised gradients.

\end{document}